\theoremstyle{definition}
\newtheorem{theorem}{Theorem}[section]
\newtheorem{corollary}{Corollary}[theorem]
\newtheorem{lemma}[theorem]{Lemma}
\newtheorem{proposition}[theorem]{Proposition}
\newtheorem{definition}{Definition}[section]
\newtheorem{hypothesis}{Hypothesis}[section]
\newtheorem{example}[theorem]{Example}
\begin{document}
\title{A natural approach to studying schema processing \\
\large{ Submitted to the IEEE Transactions on Evolutionary Computation on 11/05/2017 }}
%
%
%

\author{Jack McKay Fletcher and
        Thomas Wennekers
\thanks{Jack McKay Fletcher and  Thomas Wennekers are with the Centre for Robotics and Neural Systems (CRNS),
University of Plymouth,
Plymouth, Devon, United Kingdom
(contact jack.mckayfletcher@plymouth.ac.uk).}}
\maketitle
\begin{abstract}
The Building Block Hypothesis (BBH) states that adaptive systems combine good partial solutions (so-called building blocks) to find increasingly better solutions. It is thought that Genetic Algorithms (GAs) implement the BBH. However, for GAs building blocks are semi-theoretical objects in that they are thought only to be implicitly exploited via the selection and crossover operations of a GA. In the current work, we discover a mathematical method to identify the complete set of schemata present in a given population of a GA; as such a natural way to study schema processing (and thus the BBH) is revealed. We demonstrate how this approach can be used both theoretically and experimentally. Theoretically, we show that the search space for good schemata is a complete lattice and that each generation samples a complete sub-lattice of this search space. In addition, we show that combining schemata can only explore a subset of the search space. Experimentally, we compare how well different crossover methods combine building blocks. We find that for most crossover methods approximately 25-35\% of building blocks in a generation result from the combination of the previous generation's building blocks. We also find that an increase in the combination of building blocks does not lead to an increase in the efficiency of a GA. To complement this article, we introduce an open source Python package called {\it schematax}, which allows one to calculate the schemata present in a population using the methods described in this article.

\end{abstract}

\begin{IEEEkeywords}
Genetic Algorithms, building block hypothesis, schema processing, complete lattice, order theory, schemata, schematic lattice, schematic completion 
\end{IEEEkeywords}

%
\IEEEpeerreviewmaketitle

\section{Introduction}

Genetic Algorithms (GAs) are a hugely popular method for optimization and have found successes on many problems \cite{1134124}. Sadly, unlike other optimization techniques such as gradient decent \cite{kiwiel2001convergence}, simulated annealing \cite{granville1994simulated,belisle1992convergence,hwang1988simulated}, Ant Colony Optimization \cite{stutzle2002short,dorigo2006ant} or Particle Swarm Optimisation \cite{clerc2002particle,trelea2003particle}, GAs lack a rigorous explanation of exactly why and on what functions they perform well. There is, however, a chief approach to studying the power GAs, which is by considering the schemata GAs are processing.
\\
\\
Schemata are simple mathematical objects which describe points and hyper planes in the space of all possible words over an alphabet of the same length \cite{john1992holland}. Specifically, a schema is a word made with an additional symbol $*$ called the wild card symbol, which stands for `dont't care'. For example, the schema over the binary alphabet $1*1*$ represents the set of binary strings which have a $1$ in positions one and three and a $1$ or $0$ in positions two and four. In this way, the wild card symbol is similar to a blank tile in the popular board game Scrabble. Schemata have properties. For a schema $s$, the order of $s$, denoted $o(s)$ is the number of non-wild card symbols (that is symbols which are not `$*$') in $s$. For example, the order of the above schema is $2$. The defining length of $s$, denoted $d(s)$ is the distance between the first and last non wild card symbol. The above schema has a defining length of 2. A word is said to be an instance of $s$ if it matches $s$. For example, the word $1111$ is an instance of the above schema. In the context of a GA, the fitness of $s$ in a population is the average fitness of all of it's instances.
\\
\\
Holland, in his partly philosophical work on adaptation, argues that any adaptive process test subsets of the search space through schema \cite{john1992holland}. As each individual in the search space belongs to several schemata at once, by evaluating one individual many schemata are implicitly sampled. The idea being, if an individual is fit it suggests the schemata of which it is an instance are also fit. Thus, by testing a few individuals many schemata are sampled. This property is called {\it implicit} or {\it intrinsic} parallelism. Holland claims that natural evolution exploits this property.  From this concept, Holland created a statement about the schema processing performed by GAs: 
\begin{definition}
A building block is a low order, low defining length and above average fitness schema.
\end{definition}
\begin{hypothesis}
{\bf  The BBH} (The building block hypothesis): competent GAs find increasingly better solutions by combining building blocks.  
\end{hypothesis}
Holland's idea of building blocks is threefold. Firstly, as building blocks have above average fitness, they have a high probability of surviving and generating offspring. Secondly, as they have a low order, they have a low probability of being disrupted by mutations. Thirdly, as they have a low defining length they have a low probability of disruption because of crossover. All of these properties point towards the BBH, that is: building blocks surviving and being combined in subsequent generations. Note that the BBH is a statement about adaptive systems in general, but in this case it is applied specifically to GAs.   
\\
\\
The BBH was the putative explanation for the power of GAs for a long time. Schema theorems, which provide a lower bound on the expected number of instances of a schema in one generation occurring in the subsequent generation \cite{bridges1987analysis,poli2000exact,goldberg2001practical} seemed to add credence to the BBH as they show that building blocks have a high chance of surviving. However, in later times the BBH came under many philosophical and theoretical criticisms.  In his paper titled 'The building block fallacy' \cite{Thornton97thebuilding}, Thorton questions the reasoning leading to the BBH and proposes contradictions between the schema theorem and the BBH. Others, such as Vose calls the earlier theory of GA "myths and folklore" and argues there is a lack of a standard GA theory \cite{vose1999simple}.  In Holland's framing, the schemata being manipulated by a GA are semi-theoretical objects in that they are not directly manipulated by the genetic algorithm. Rather, it is proposed that the distribution of offspring should change {\it as if} the schemata of the parents had been sampled and combined. 
\\
\\
There is one article which studies schema processing in a non-theoretical manner. Namely, Mitchell et al.'s \cite{mitchell1992royal} insightful work on building blocks. In this article, building blocks are indirectly studied through ``Royal Road Functions". ``Royal Road Functions'' are fitness functions which have building blocks explicitly built into them. These functions reward individuals for finding good partial solutions, in a way setting up `stepping stones' along the way to the optimal solution. Assuming the BBH, one would expect to find an optimal solution  very quickly as `building blocks' are written directly into the fitness function.  However, the authors find that when the fitness function {\it does not} have stepping stones it performs better, that is a GA finds the optimal solution in a fewer number of steps. It is suggested that the reward for partial solutions in Royal Road Functions causes early convergence, specifically the GA gets stuck in local optima created by the stepping stones. While the authors offer an indirect insight into BBH, it is still unclear what exactly happens to schemata in the course of a genetic algorithm. The `building blocks' in Mitchell et al.'s paper are somewhat artificial as they are defined in the fitness function rather than discovered by the population itself. It is not obvious if the same findings would apply to `real' building blocks found by a GA. In practical terms, the most common application of the BBH is as a heuristic in the design of efficient encodings for GAs. In particular, encodings are often chosen as to allow building blocks to be combined meaningfully by the genetic operators \cite{janikow1991experimental}.  
\\
\\
We believe the fundamental problem with using the BBH as a narrative for GAs (and with seeing GAs as schema processors to begin with for that matter) is that there is no method to {\it observe} schemata being manipulated by a GA, as such it is hard, if not impossible to test accurately any meaningful statement about the type of schema processing performed by a GA.  Thus, in the current work, we present a natural method for identifying the set of schemata being tested by a population. We call this method the `schematic completion'. We also find that the schemata found by the schematic completion always forms a mathematical structure called a complete lattice, we call `the schematic lattice'. Using these methods, one can observe the {\it exact} schema processing performed by a GA by simply calculating the schemata present in each population and thus. These methods, we hope, are useful tools for studying GAs through the conceptual lens of schema processing and we hope that they will deepen the understanding of GAs. Specifically, we hope to be able to explore what makes a function difficult or easy for a GA to optimize, understand how useful it is to see GAs as `schema processors' and perhaps inform the design of better selection, crossover and mutation methods as to improve schema processing of a GA.
\\
\\
In the text that follows firstly the basics of order and lattice theory are introduced, this theory will be used to define our method for calculating the schemata present in a populating. Secondly, schemata are formally defined and the notions of schematic completion and the schematic lattice are introduced. To demonstrate the usefulness of these methods, in section 4 and 5, we show how these novel notion can be used to study GAs both theoretically and experimentally. In section 4, theoretically we show the search space for good schemata is a complete lattice and that each generation samples a complete sublattice of this search space. We also find that combining schemata is not a good method to explore the search space of schemata as in most cases only a subset of the search space can be reached by combining schemata alone. In section 5, we experimentally examine how well various crossover methods combine building blocks. We find that only 25-35\% of building blocks in a generation result from the combination of the previous generation's building blocks. We also find that an increase in building block combinations does not correspond to increase in the efficiency of the GA. In the appendix of this article, an open source Python package called schematax is introduced, which efficiently calculates the schematic completion and to draws the schematic lattice. We encourage readers interested in the following work to exploit this package for their research into GAs.
\\
\\
It should be noted that the mathematical insights presented in this article are not particularly difficult to understand in and of themselves. However to situate schemata in the broader mathematics of order and lattice theory some basic mathematical definitions and proofs are required. If the reader is not acquainted with this subject area we advise them to look simply at the examples and figures in section 3 to intuit the notions of the schematic completion and schematic lattice.  


\section{Background Mathematics}
In this section, we cover the background mathematics required to introduce the basic insights into schemata presented in the next section. Many of the following definitions regarding order theory and lattice theory are adapted from Ganter and Wille's book on Formal Concept Analysis \cite{ganter1999formal} and also Birkhoff's seminal book on lattice and order theory \cite{birkhoff1948lattice}. If these areas are understood, please skip ahead to the next section.
Firstly we cover Order and Lattice theory, secondly, we introduce Closure Systems and Galois Connections. 
\subsection{Order and Lattice theory}
     
\begin{definition}
A relation $\leq$ is called a partial order on a set $S$ if for all $a,b,c \in S$ it satisfies:
\begin{enumerate}
\item reflexivity: $a \leq a$               
\item antisymmetry: $a \leq b$ and $b \leq a \implies a = b$
\item transitivity: $a \leq b$ and $b \leq c \implies a \leq c$
\end{enumerate}
A {\bf partially ordered set} (poset for short) is a pair $(S,\leq)$ with $\leq$ being a partial order on the set $S$. We use $a < b$ if $a \leq b$ and 
$a \neq b$. $\geq$ denotes the inverse of $\leq$.
\end{definition}
\begin{definition}
A {\bf lower neighbour} of an element $b$ is another element $a$ such that there is no element $c$ with: $a < c < b$. In this case, $b$ is an {\bf upper neighbour} of a and we write $a \prec b$ to indicate this.  In the literature, it is also said as  $b$ {\bf covers} $a$.  
\end{definition}
Every finite partially ordered set, $(S,\leq)$ can be represented by a {\bf Hasse diagram}. Each element in $S$ is depicted by a circle. For any $a,b \in S$ if $a \prec b$ a line is drawn between the circles representing $a$ and $b$. Using a Hasse diagram one can read off any order relation: $a < b$ iff there is a descending path from $b$ to $a$. Figure 1 shows two Hasse Diagrams.
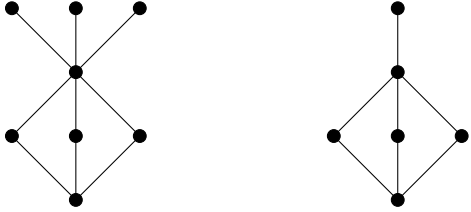
\begin{figure}[h!]
\centering
    \begin{tikzpicture}[scale=.85]

    \draw(0,0)[fill] circle (.1cm) node[right] {};
        \draw (0,0) -- (-1,1);
        \draw (0,0) -- (0,1);
        \draw (0,0) -- (1,1);
    \draw(0,1)[fill]  circle (.1cm) node[right] {};
        \draw (0,1) -- (0,2);
    \draw(-1,1)[fill]  circle (.1cm) node[right] {};
        \draw (-1,1) -- (0,2);
    \draw(1,1)[fill]  circle (.1cm) node[right] {};
        \draw (1,1) -- (0,2);
    \draw(0,2)[fill]  circle (.1cm) node[right] {};
        \draw (0,2) -- (0,3);
        \draw (0,2) -- (1,3);
        \draw (0,2) -- (-1,3);
    \draw(0,3)[fill]  circle (.1cm) node[right] {};
    \draw(1,3)[fill]  circle (.1cm) node[right] {};
    \draw(-1,3)[fill]  circle (.1cm) node[right] {};
    \end{tikzpicture}
    \hspace{2cm}
        \begin{tikzpicture}[scale=.85]

    \draw(0,0)[fill] circle (.1cm) node[right] {};
        \draw (0,0) -- (-1,1);
        \draw (0,0) -- (0,1);
        \draw (0,0) -- (1,1);
    \draw(0,1)[fill]  circle (.1cm) node[right] {};
        \draw (0,1) -- (0,2);
    \draw(-1,1)[fill]  circle (.1cm) node[right] {};
        \draw (-1,1) -- (0,2);
    \draw(1,1)[fill]  circle (.1cm) node[right] {};
        \draw (1,1) -- (0,2);
    \draw(0,2)[fill]  circle (.1cm) node[right] {};
        \draw (0,2) -- (0,3);
    \draw(0,3)[fill]  circle (.1cm) node[right] {};
    \end{tikzpicture}
    
\caption{Two Hasse diagrams with 8 elements}
\end{figure}
\begin{definition}
A {\bf rank function}, $\rho$, over a poset $(M,\leq)$ is a function which maps each element in $M$ to the natural numbers such that for $x,y \in M$ the following two properties are satisfied:
$$x \leq y \implies \rho(x) \leq \rho(y)$$
$$ x \preceq y \implies \rho(y) = \rho(x) +1$$
\end{definition}
\begin{definition}
Let $(M,\leq)$ be a poset and let $A \subseteq M$. A {\bf lower bound} of $A$ is an element $m \in M$ with $m \leq a$ for all $a \in A$. An {\bf upper bound} of $A$ can be defined dually. If there is a largest element in the set of all lower bounds of $A$, this element is called the  {\bf infimum} of $A$, denoted $\bigwedge A$. Dually, if there is a smallest element in the set of all upper bounds of $A$, this element is called the {\bf supremum} of $A$, denoted $\bigvee A$. If $A = \{x,y\}$, the infimum of $A$ is called the join and is denoted $x \wedge y$, dually the supremum of $A$ is called the meet and is denoted $x \vee y$
\end{definition}
Intuitively, one can think of the supremum of a $A$ as the ``smallest element in $M$ which is greater than or equal to all elements in $A$". The infimum of $A$ can be seen as ``the largest element in $M$ which is less than or equal to all elements in $A$". 
\begin{definition}
We call an ordered set $\boldsymbol{L} := (L,\leq)$ a {\bf lattice} if for every $x,y \in L$, $x \wedge y$ and $x \vee y$ exist.  We call $(L,\leq)$ a {\bf complete lattice} if for every $X \subseteq L$, the infimum $\bigwedge X$ and the supremum $\bigvee X$ always exist. Every complete lattice, $\boldsymbol{L}$ has a largest element $\bigvee L$ called the {\bf unit element} of $\boldsymbol{L}$, denoted $\boldsymbol{1}_L$. Dually it has a smallest element, $\bigwedge L$, called the  {\bf zero element} of $\boldsymbol{L}$, denoted $\boldsymbol{0}_L$.  
\end{definition}
Every complete lattice is of course a lattice. Moreover, every finite non-empty lattice is a complete lattice.
\begin{example}
The left Hasse Diagram in figure 1 is not a lattice nor a complete lattice as the join on the top most elements does not exist. However, the right Hasse diagram in figure 1 is a complete lattice (and thus a lattice) as the supremum and infimum exist for any subset of elements.  Any closed real interval $[x,y]$ with $\leq$ under its normal interpretation as an ordering is a complete lattice. However, any unbounded set of real numbers is not a complete lattice, but is a lattice. The power set of any non-empty set with $\subseteq$ as an ordering is an exemplary complete lattice. The Hasse diagram of the complete lattice formed by  $(\mathcal{P}(\{a,b,c\}), \subseteq)$ is shown in figure 2.

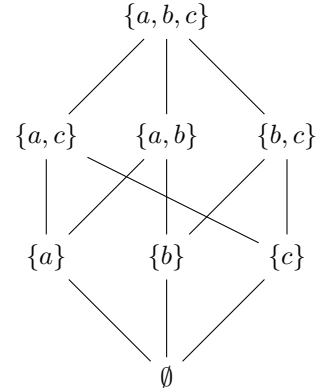
\begin{figure}[h!]
\centering
\begin{tikzpicture}[node distance=1.6cm]
\node(0)                           {$\{a,b,c\}$};
\node(2)      [below of =0]  {$\{a,b\}$};
\node(3)      [left of=2] {$\{a,c\}$};
\node(1)       [right of=2] {$\{b,c\}$};

\node(4)     [below of=3]          {$\{a\}$};
\node(5)    [below of=2]            {$\{b\}$};
\node(6)    [below of=1]            {$\{c\}$};
\node(7)    [below of=5]     {$\emptyset$};

\draw(0)       -- (1);
\draw(0)       -- (2);
\draw(0)       -- (3);

\draw(2)       -- (4);
\draw(2)       -- (5);

\draw(3)        -- (4);
\draw(3)        -- (6);

\draw(1)        -- (6);
\draw(1)        -- (5);

\draw(4)        -- (7);
\draw(5)        -- (7);
\draw(6)        -- (7);
\end{tikzpicture}
\caption{The complete lattice formed by $(\mathcal{P}(\{a,b,c\}), \subseteq)$.}
\end{figure}
\end{example}
\begin{definition}
A subset $A$ of a complete lattice $\boldsymbol{L}$ which is closed under suprema and infima, specifically:
$$ X \subseteq A  \implies \bigvee X,\bigwedge X \in A$$
is called {\bf complete sublattice}. If $A$ is only closed under suprema it is called a {\bf meet-subsemilattice}. Dually, if $A$ is closed under infima only, it is called a {\bf join-subsemilattice}. 
\end{definition}
\begin{example}
The poset $(\{\{a,b,c\},\{a,b\},\{b,c\},\{b\}\},\subseteq)$ is a complete sublattice of the complete lattice defined in figure 2. This complete sublattice can be seen in figure 3 below:
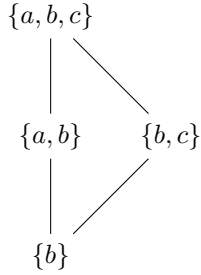
\begin{figure}[h!]
\centering
\begin{tikzpicture}[node distance=1.6cm]
\node(0)                           {$\{a,b,c\}$};
\node(2)      [below of =0]  {$\{a,b\}$};
\node(1)       [right of=2] {$\{b,c\}$};

\node(5)    [below of=2]            {$\{b\}$};

\draw(0)       -- (1);
\draw(0)       -- (2);

\draw(2)       -- (5);

\draw(1)        -- (5);

\end{tikzpicture}
\caption{The complete lattice $(\{\{a,b,c\},\{a,b\},\{b,c\},\{b\}\},\subseteq)$. This is a complete sublattice of the complete lattice $(\mathcal{P}(\{a,b,c\}),\subseteq)$ shown in figure 2.}
\end{figure}
\end{example}

\begin{definition}
Let $(L,\leq)$ be a complete lattice. An element $a \in L$ is called an {\bf atom} of L if $\boldsymbol{0}_L < a$ and there does not exist an element $x \in L$ such that  $\boldsymbol{0}_L < x < a$. L is called {\bf atomic} if every element $b > \boldsymbol{0}_L $ implies that $b$ is an atom or that $b$ has an atom below it. That is,  $b \geq a > 0$. L is called {\bf atomistic} if every element in $L$ can be given by the supremum of a subset of the atoms.
\end{definition}

\begin{example}
The set of atoms of the complete lattice shown in figure 2 is: $\{\{a\}, \{b\}, \{c\}\}$. This complete lattice is atomic and atomistic. The complete lattice shown on the right of figure 1 is atomic, however it is not atomistic as the top element cannot be reached by the supremum on any subset of the atoms.  
\end{example}

\subsection{Closure Systems and (monotone) Galois Connections }
\begin{definition}
A {\bf closure system} on a set $S$ is a set of subsets of $S$ which contains $S$ and is closed under intersection. Specifically, $A \subseteq \mathcal{P(S)}$ is called a closure system on $S$ if $S \in A$ and:
$$X \subseteq A \implies \bigcap X \in A $$
\end{definition}
\begin{example}
Consider, $S = \{a,b,c\}$ and let $ A = \{\{a,b,c\}, \{a,b\},\{a\}\}$. $A$ is a closure system on $S$, as $S$ is included in $A$ and $A$ is closed under intersection as any intersection of elements in $A$ is also a member of $A$.
\end{example}
\begin{definition}
A {\bf closure operator} on a set $S$ is map, $cl: \mathcal{P}(S) \mapsto \mathcal{P}(S)$ which satisfies the following for all $X,Y \subseteq S$
\begin{enumerate}
\item extensity: $X \subseteq cl(X)$.
\item monotonicity: $X \subseteq Y \implies cl(X) \subseteq cl(Y)$.
\item idempotency: $cl(cl(X)) = cl(X)$.
\end{enumerate}
\end{definition}
Closure operators and closure systems are closely linked, as can be seen in the following theorem.
\begin{theorem}
If $cl$ is a closure operator on a set $S$ then the set:
$$ A_{cl} := \{cl(X)| X \subseteq S\}$$
(the set of all closures of a closure operator) is a closure system. Conversely if $A$ is a closure system on $S$ then the following operator:
$$cl_{A}(X) := \bigcap \{a \in A| X \subseteq a\}$$
Defines a closure operator. 
\end{theorem}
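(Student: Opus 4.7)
The plan is to prove the two directions separately, each by unpacking the relevant definition and invoking the three axioms of a closure operator (or the two axioms of a closure system) in turn.

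For the forward direction I would first establish that $S \in A_{cl}$. Since $cl$ maps into $\mathcal{P}(S)$ we have $cl(S) \subseteq S$, while extensity gives $S \subseteq cl(S)$, hence $cl(S) = S$ and so $S \in A_{cl}$. For closure under intersection, fix a family $\{cl(Y_i)\}_{i \in I} \subseteq A_{cl}$ and set $Z = \bigcap_{i \in I} cl(Y_i)$. I want to exhibit $Z$ itself as a closure. By extensity, $Z \subseteq cl(Z)$. For the reverse inclusion, for each $i$ we have $Z \subseteq cl(Y_i)$, so monotonicity and idempotency give $cl(Z) \subseteq cl(cl(Y_i)) = cl(Y_i)$; intersecting over $i$ yields $cl(Z) \subseteq Z$. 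Thus $cl(Z) = Z$ and therefore $Z = cl(Z) \in A_{cl}$.

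For the reverse direction I would verify the three closure operator axioms for $cl_A$. First, the intersection defining $cl_A(X)$ is over a non-empty family because $S \in A$ and $X \subseteq S$, so $cl_A$ is well-defined on all of $\mathcal{P}(S)$. Extensity is immediate: every $a$ in the indexed family contains $X$, so $X \subseteq \bigcap \{a \in A \mid X \subseteq a\} = cl_A(X)$. Monotonicity follows because $X \subseteq Y$ yields $\{a \in A \mid Y \subseteq a\} \subseteq \{a \in A \mid X \subseteq a\}$, and a smaller family has a larger (or equal) intersection, giving $cl_A(X) \subseteq cl_A(Y)$. For idempotency, the key observation is that $cl_A(X) \in A$, which holds because $A$ is closed under intersection. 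Since $cl_A(X)$ is then itself an element of the family $\{a \in A \mid cl_A(X) \subseteq a\}$, it is trivially the smallest member, and therefore $cl_A(cl_A(X)) = cl_A(X)$.

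The main obstacle, more a matter of finding the right idea than a genuine difficulty, is the step in the forward direction where one must recognize that taking $W = Z$ itself is the correct witness for $Z \in A_{cl}$; this reduces the problem to showing the fixed-point equation $cl(Z) = Z$, which cleanly combines all three closure axioms. Once this insight is in place, everything else is essentially bookkeeping with set inclusions and the fact that $S$ always lies in a closure system.
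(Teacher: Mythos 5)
Your proof is correct: both directions are complete, and the two places where care is needed --- taking $Z=\bigcap_i cl(Y_i)$ itself as the witness and proving the fixed-point identity $cl(Z)=Z$, and observing that $cl_A(X)\in A$ so that it belongs to its own defining family for idempotency --- are handled properly. The paper states this theorem without proof, treating it as standard background adapted from the order- and lattice-theory literature, so there is no in-paper argument to compare against; yours is the standard textbook proof of this correspondence.
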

There is a bijection between closure operators and closure systems. Every closure operator has a corresponding closure system and every closure system has a corresponding closure operator. A closure system can be seen as the set of all closures of a closure operator. 
Whats more, closure systems (and thus closure operators) are closely linked with complete lattices, as will be seen in the next proposition.  
\begin{proposition}
If $A$ is a closure system then $(A,\subseteq)$ is a complete lattice, where for $X \subseteq A$ the infinum, $\bigwedge X$ is given by $\bigcap X$ and the supremum $\bigvee X$ is given by $cl_{A}(\bigcup X)$. Every complete lattice is isomorphic to the lattice of all closures of a closure system. 
\end{proposition}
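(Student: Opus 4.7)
The plan is to handle the two assertions separately: first verify the formulas for suprema and infima in $(A,\subseteq)$, then construct an explicit closure-system representation of an arbitrary complete lattice.

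For the first part, I would begin with the infimum. For any $X \subseteq A$, the set $\bigcap X$ lies in $A$ by the defining property of a closure system, so it is a candidate for $\bigwedge X$. It is clearly a lower bound of $X$ under $\subseteq$, and any lower bound $m \in A$ satisfies $m \subseteq x$ for every $x \in X$, hence $m \subseteq \bigcap X$. Thus $\bigwedge X = \bigcap X$. For the supremum, first note that $cl_A(\bigcup X) \in A$, since by Theorem 2.1 the set of closures of $cl_A$ is exactly $A$. Extensity gives $x \subseteq \bigcup X \subseteq cl_A(\bigcup X)$ for each $x \in X$, so $cl_A(\bigcup X)$ is an upper bound. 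If $u \in A$ is any other upper bound, then $\bigcup X \subseteq u$, so by monotonicity $cl_A(\bigcup X) \subseteq cl_A(u)$. The key observation is then $cl_A(u) = u$: indeed, $cl_A(u) = \bigcap\{a \in A : u \subseteq a\}$ and $u$ itself belongs to that collection, giving $cl_A(u) \subseteq u$, with the reverse inclusion by extensity. Hence $cl_A(\bigcup X) \subseteq u$, confirming that $\bigvee X = cl_A(\bigcup X)$.

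For the second part, I would give a representation of an arbitrary complete lattice $\boldsymbol{L} = (L, \leq)$ as a closure system on its own underlying set. For each $a \in L$, let ${\downarrow}a := \{x \in L : x \leq a\}$ denote the principal down-set, and put $A := \{{\downarrow}a : a \in L\}$. Then $A$ is a closure system on $L$: the whole set appears as $L = {\downarrow}\boldsymbol{1}_L \in A$, and for any family $\{{\downarrow}a_i\}_{i \in I}$ we have
\[
\bigcap_{i \in I} {\downarrow}a_i \;=\; {\downarrow}\bigl(\bigwedge_{i \in I} a_i\bigr) \;\in\; A,
\]
the existence of the infimum being guaranteed by completeness of $\boldsymbol{L}$. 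The map $\varphi : L \to A$ defined by $\varphi(a) = {\downarrow}a$ is injective by antisymmetry (since ${\downarrow}a = {\downarrow}b$ forces $a \leq b$ and $b \leq a$), surjective by construction, and satisfies $a \leq b \iff {\downarrow}a \subseteq {\downarrow}b$. Thus $\varphi$ is an order-isomorphism, hence a lattice isomorphism, between $\boldsymbol{L}$ and $(A,\subseteq)$.

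The routine inclusions and definitional unpacking here are straightforward; the one step that deserves attention is showing $cl_A(u) = u$ for $u \in A$, since the supremum formula hinges on it, and this is precisely where the specific form of $cl_A$ given in Theorem 2.1 is used. The choice of principal down-sets for the second part is natural but not the only option — the point is to package the lattice's own infima into the intersection operation of a closure system, after which the order-isomorphism is almost tautological.
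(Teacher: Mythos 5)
The paper states this proposition without proof --- it appears as background material adapted from Ganter--Wille and Birkhoff --- so there is no in-paper argument to compare against. Your proof is the standard one and it is correct: the infimum half follows immediately from closure under intersection, the supremum half correctly reduces to the observation that $cl_A(u)=u$ for $u\in A$ (which you rightly flag as the load-bearing step), and the representation of an arbitrary complete lattice via principal down-sets ${\downarrow}a$ is the canonical construction, with the identity $\bigcap_i {\downarrow}a_i = {\downarrow}\bigl(\bigwedge_i a_i\bigr)$ doing exactly the work needed to make $\{{\downarrow}a : a\in L\}$ a closure system. Two trivial remarks: to place $cl_A(\bigcup X)$ in $A$ you do not need to route through the claim that the closures of $cl_A$ are exactly $A$ --- it is an intersection of members of $A$ (a nonempty intersection, since $S\in A$), so closure under intersection gives it directly; and your injectivity argument implicitly uses $a\in{\downarrow}a$, i.e.\ reflexivity, which is worth saying. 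Neither affects correctness.
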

\begin{example}
The complete lattice formed by the closure system $A$ appearing in the example above is seen in figure 4, below:
\begin{figure}[h!]
\centering
\begin{tikzpicture}[node distance=1.2cm]
\node(0)                           {$\{a,b,c\}$};
\node(1)      [below of =0]  {$\{a,b\}$};
\node(2)       [below of=1] {$\{a\}$};
\node(3)        [below of =2] {$\emptyset$};

\draw(0)       -- (1);
\draw(1)       -- (2);
\draw(2)        -- (3);

\end{tikzpicture}
\caption{The complete lattice formed by the closure system $A$ with $\subseteq$ as an ordering.}
\label{fig:1}
\end{figure}
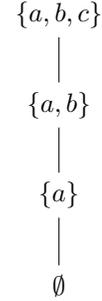
\end{example}
\begin{definition}
Suppose we have two partially ordered sets, $(A, \leq)$ and $(B, \leq)$.  Two montone functions over these sets, $F: A \mapsto B$ and $G: B \mapsto A$ are called a {\bf Galois connection} of $A$ and $B$ if we have for all $a \in A$ and $b \in B$:
$$F(a) \leq b \iff a \leq G(b)$$
In this case, $F$ is called the {\bf lower adjoint} and G the {\bf upper adjoint}.
Equivalently, if $F$ and $G$ satisfy the following conditions they also form a Galois Connection. For all $a_1,a_2 \in A$ and all for call $b_1 \in B$ we have:
\begin{enumerate}
\item $a_1 \leq a_2 \implies F(a_1) \leq  F(a_2)$ 
\item $a_1 \leq GF(a_1)$ 
\item $a_1 \leq G(b_1) \implies F(a_1) \leq b_1 $
\end{enumerate}
\end{definition}
\begin{proposition}
The composition $GF: A \mapsto A$ is a closure operator. 
\end{proposition}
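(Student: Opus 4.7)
The plan is to verify directly the three axioms of a closure operator from Definition 8 for the composite map $GF$, using the equivalent characterisation (conditions 1--3) given in the definition of a Galois connection.

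First, I would dispatch \textbf{extensity} for free: condition 2 of the Galois connection literally states $a_1 \leq GF(a_1)$, which is exactly the extensity axiom $X \subseteq cl(X)$ rewritten in the order-theoretic notation. Second, for \textbf{monotonicity} of $GF$, I would simply compose: condition 1 gives monotonicity of $F$, and $G$ is assumed monotone in the definition of a Galois connection, so $a_1 \leq a_2$ implies $F(a_1) \leq F(a_2)$ which in turn implies $G(F(a_1)) \leq G(F(a_2))$.

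The substantive step is \textbf{idempotency}, i.e.\ $GFGF(a) = GF(a)$. One inequality, $GF(a) \leq GFGF(a)$, is immediate by applying extensity (condition 2) to the element $GF(a)$. The other inequality $GFGF(a) \leq GF(a)$ is where I expect the real work to happen. My plan is to set $b := F(a)$ and invoke condition 3 in the form $a_1 \leq G(b_1) \Rightarrow F(a_1) \leq b_1$, taking $a_1 = G(b)$ and $b_1 = b$. Since $G(b) \leq G(b)$ trivially, condition 3 yields $FG(b) \leq b$, i.e.\ $FGF(a) \leq F(a)$. Applying the monotone $G$ to both sides then gives $GFGF(a) \leq GF(a)$, which combined with the reverse inequality closes the argument.

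The hard part (to the extent there is one) is noticing that condition 3 is really the content of a ``co-extensity'' statement $FG \leq \mathrm{id}$ in disguise, which is exactly what one needs to push idempotency through; the rest of the proof is a clean composition of the three Galois axioms and requires no further structure on $A$ or $B$ beyond being posets.
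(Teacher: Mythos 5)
Your proof is correct. Note that the paper itself states this proposition without any proof at all, so there is nothing to compare against; your argument is the standard one and fills a genuine gap in the text. All three steps check out: extensity is literally condition 2 of the paper's Galois-connection definition, monotonicity of $GF$ follows from condition 1 together with the monotonicity of $G$ (which the paper builds into the definition), and your idempotency argument correctly extracts the co-extensity fact $FG(b) \leq b$ from condition 3 by instantiating $a_1 = G(b)$, $b_1 = b$, then pushes it through the monotone $G$ and closes with antisymmetry. The only wrinkle worth flagging is that the paper's Definition of a closure operator is phrased for maps $\mathcal{P}(S) \to \mathcal{P}(S)$ under $\subseteq$, whereas the proposition concerns $GF$ on an arbitrary poset $A$; you implicitly (and correctly) work with the order-theoretic generalisation, which is also what the paper needs later when it applies this proposition to ${\uparrow}{\downarrow}$ in the proof of the fundamental theorem of schemata.
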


\FloatBarrier
\section{Schemata}
In this section we define schemata and define two basic operations, the {\bf expansion} and {\bf compression}. Secondly, using these operators we define the notions of the schematic completion and the schematic lattice and prove properties about these notions. 
\\
Let $\Sigma$ be a finite alphabet which does not contain symbol $*$. We use $\Sigma^l$ to denote the set of all words of length $l$ over $\Sigma$.
\begin{definition}
The {\bf schematic alphabet} of $\Sigma$ is $\Sigma$ with an extra symbol, $*$, the {\bf wild card} symbol. We use $\Sigma_*$ to denote the schematic alphabet of $\Sigma$. Symbols in $\Sigma_*$ which are not the wild card symbol are called {\bf fixed} symbols.
\end{definition}
\begin{definition}
A {\bf schema} is a word over $\Sigma_*$. We use $\Sigma^l_*$ to denote all schemata of length $l$ over $\Sigma_*$ including the empty schema, $\epsilon_*$. 
\end{definition}
\begin{example}
Let $\Sigma$ be the binary alphabet, that is $\{1,0\}$. The schematic alphabet of $\Sigma$, denoted $\Sigma_*$, is the alphabet $\{1,0,*\}$. An example of a schema in $\Sigma^3_*$ is $1**$. 
\end{example}
\begin{definition}
For any schema $s \in \Sigma^l_*$ we define the following operator ${\uparrow}s$, called the {\bf expansion} of $s$, which maps $s$ to a subset of words in $\Sigma^l$:
$${\uparrow}s := \{b \in \Sigma^l | b_i = s_i \mbox{ or } s_i = * \mbox{ for each }  i \in \{1,...,l\}\}$$
where subscript $i$ denotes the character at position $i$ in a word or schema.  When $s = \epsilon_*$ then ${\uparrow}s  = \emptyset$.  More simply put, ${\uparrow}s$ is the set of all words in $\Sigma^l$ that can be made by exchanging the $*$ symbols in $s$ with symbols from $\Sigma$.
\end{definition}
\begin{example}
Continuing the example above, ${\uparrow}1** = \{100,110,101,111\}$. The $1$ in the first position is fixed. Note ${\uparrow}111 = \{111\}$ and ${\uparrow}\epsilon_* = \emptyset$. 
\end{example}
\begin{definition}
Conversely, for any $A \subseteq \Sigma^l$ we define ${\downarrow}{A}$, called the {\bf compression} of $A$,  which maps $A$ on to a schema $s \in \Sigma_*^l$
$${\downarrow}A:= s$$
where $s$ is a schema of length $l$ such that the symbol at position $i$ in $s$ is determined in the following way: if $x_i = y_i$ for all $x,y \in A$ then $s_i = x_i$ otherwise $s_i = *$. If $A = \emptyset$ then ${\downarrow}A = \epsilon_*$. One can think of this operator as stacking up all the items in $A$ and if all elements in a column are equivalent, the symbol at that position in $s$ takes this value, otherwise there is a wild card symbol. 
\end{definition}
\begin{example}
Let $A = \{100,000,010\}$ then ${\downarrow}A = **0$. Note if $A = \emptyset$ then ${\downarrow}A = \epsilon_*$. If $A = \{100\}$ then ${\downarrow}A = 100$ 
\end{example}
\begin{definition}
Schemata can be ordered. For any $a,b \in \Sigma^l_*$ we say $a \leq b$ if and only if ${\uparrow}a \subseteq {\uparrow}b$. It follows that $\leq$ is a partial ordering on a set of schemata from the reflexivity, antisymmetry and transitivity of the subset relation.    
\end{definition}
\begin{example}
Again let $\Sigma = \{1,0\}$. Consider the following schema in $\Sigma^2_*$: $\epsilon_*$, $11$, $1*$, $**$. They are ordered in the following way: $\epsilon_* \leq 11 \leq 1* \leq **$.
This is because ${\uparrow}\epsilon_* \subseteq {\uparrow}11 \subseteq {\uparrow}1* \subseteq {\uparrow}** = \emptyset \subseteq \{11\} \subseteq \{11,10\} \subseteq \{11,10,01,00\}$.
\end{example}
\begin{definition}
It is possible to define compression in terms of expansion:
$${\downarrow}A:= s$$
such that $A \subseteq {\uparrow}s$ and for any $r \in \Sigma^l_*$
$$A \subseteq {\uparrow}r \implies s \leq r$$
That is, ${\downarrow}A$ is the schema whose expansion includes $A$ and is the smallest such schema to do so. 
\end{definition}
\begin{definition}
Conversely we can define expansion in terms of compression:
$${\uparrow}s:= A$$
such that ${\downarrow}A = s$ and for any $B \subseteq \Sigma^l$ we have:
$$ {\downarrow}B = s \implies B \subseteq A $$
That is, ${\uparrow}s$ is the largest subset of words whose compression is equal to $s$.
\end{definition}
We will soon see that definitions 3 and 4 of the expansion and compression operators are useful computationally while definitions 6 and 7 are useful in proving properties about schemata.  
\begin{proposition}
For any schema $s \in \Sigma^l_*$, we have ${\downarrow}{\uparrow}s = s$.
\end{proposition}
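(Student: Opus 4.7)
The plan is to invoke Definition 6, which characterises ${\downarrow}A$ as the $\leq$-smallest schema $s'$ satisfying $A \subseteq {\uparrow}s'$. Setting $A = {\uparrow}s$ makes $s$ itself an obvious candidate, since ${\uparrow}s \subseteq {\uparrow}s$ holds by reflexivity of $\subseteq$. Hence the task reduces to verifying that $s$ is minimal among such candidates.

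For the minimality step, I would take an arbitrary $r \in \Sigma^l_*$ with ${\uparrow}s \subseteq {\uparrow}r$ and observe that, by Definition 5, this inclusion is literally the statement $s \leq r$. Therefore every such $r$ lies above $s$ in the schematic order, so $s$ is the minimum of $\{r \in \Sigma^l_* : {\uparrow}s \subseteq {\uparrow}r\}$, and Definition 6 then gives ${\downarrow}{\uparrow}s = s$. The boundary case $s = \epsilon_*$ is handled by the stipulations already embedded in Definitions 3 and 4, namely ${\uparrow}\epsilon_* = \emptyset$ and ${\downarrow}\emptyset = \epsilon_*$.

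The hard part will essentially be hidden in plain sight: the whole proof collapses to the observation that the set $\{r : {\uparrow}s \subseteq {\uparrow}r\}$ coincides with the principal up-set $\{r : s \leq r\}$, whose least element is $s$ by antisymmetry. One could alternatively argue directly from Definition 4 by a position-by-position case split -- every fixed position of $s$ is agreed upon by all elements of ${\uparrow}s$, while every wild-card position of $s$ is witnessed by two elements of ${\uparrow}s$ that differ there -- but this route tacitly requires $|\Sigma| \geq 2$, a caveat that the Definition 6 argument avoids. The deeper content of the proposition is really just that the two pairs of definitions of $\uparrow$ and $\downarrow$ are mutually consistent, which is exactly the bookkeeping that will later let the authors slot these operators into the Galois-connection machinery of Section 2.
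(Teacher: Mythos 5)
Your proposal is correct and follows essentially the same route as the paper, which simply notes that with $A = {\uparrow}s$ Definition III.6 ``trivially yields'' ${\downarrow}A = s$; you have merely unpacked that triviality by verifying $s$ is a candidate (reflexivity of $\subseteq$) and is minimal (the inclusion ${\uparrow}s \subseteq {\uparrow}r$ is by definition the statement $s \leq r$). Your side remark that the position-by-position argument from Definition III.4 would need $|\Sigma| \geq 2$ is a genuine observation the paper does not make, but it does not change the proof.
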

\begin{proof}
Let $A = {\uparrow}s$ definition III.6 trivially yields ${\downarrow}A = s$, thus  ${\downarrow}{\uparrow}s = s$.
\end{proof}
\begin{proposition}
For $A \subseteq \Sigma^l$, we have $A \subseteq {\uparrow}{\downarrow}A$.
\end{proposition}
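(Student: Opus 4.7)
The plan is to read the desired containment off the very definition of compression. By Definition III.6 (compression characterized via expansion), ${\downarrow}A$ is specified as the schema $s$ that simultaneously satisfies $A \subseteq {\uparrow}s$ and is $\leq$-minimal with respect to this property. So the statement $A \subseteq {\uparrow}{\downarrow}A$ is literally the first half of the defining conditions; the proof is a one-line appeal to that definition.

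If one wishes instead to prove the statement directly from the constructive Definition III.4 (and then use Proposition III.1 to reconcile the two definitions), I would argue pointwise. Fix an arbitrary $a \in A$ and an arbitrary position $i \in \{1,\ldots,l\}$, and let $s := {\downarrow}A$. By the construction in Definition III.4, either $s_i = *$, in which case the matching condition at position $i$ is automatic, or there is a common symbol $x_i$ with $y_i = x_i$ for every $y \in A$ and $s_i = x_i$; in the latter case $a_i = x_i = s_i$. In either case $a$ matches $s$ at position $i$, so by the definition of ${\uparrow}s$ we conclude $a \in {\uparrow}s = {\uparrow}{\downarrow}A$. Since $a \in A$ was arbitrary, this gives $A \subseteq {\uparrow}{\downarrow}A$.

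The only separate case is the degenerate one $A = \emptyset$: then by definition ${\downarrow}A = \epsilon_*$, and by definition ${\uparrow}\epsilon_* = \emptyset$, so $A \subseteq {\uparrow}{\downarrow}A$ reduces to $\emptyset \subseteq \emptyset$, which holds trivially. There is no real obstacle in this proposition; it is essentially a bookkeeping check that the constructive and characterization-based definitions of compression agree, and it will later pair with Proposition III.1 (${\downarrow}{\uparrow}s = s$) to establish that $({\uparrow},{\downarrow})$ behaves as a Galois-type pair, from which ${\uparrow}{\downarrow}$ being a closure operator on $\mathcal{P}(\Sigma^l)$ will follow by Proposition II.2.
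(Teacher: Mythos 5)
Your proof is correct and takes essentially the same route as the paper: a one-line appeal to the adjoint-style characterisation of the operators (the paper cites the dual Definition III.7, reading $A \subseteq {\uparrow}s$ off the fact that ${\uparrow}s$ is the largest set compressing to $s={\downarrow}A$, whereas you read it off the first clause of Definition III.6, which is if anything more direct). Your supplementary pointwise argument from the constructive definition and the $A=\emptyset$ check are also correct, just not present in the paper.
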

\begin{proof}
Let $s = {\downarrow}A$, definition III.7 trivially yields $A \subseteq {\uparrow}s$, thus $A \subseteq {\uparrow}{\downarrow}A$.
\end{proof}
\begin{proposition}
Compression is monotonic, that is for $A,B \subseteq \Sigma^l$:
$$A \subseteq B \implies {\downarrow}A \leq {\downarrow}B$$
\end{proposition}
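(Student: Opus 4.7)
The plan is to exploit the universal property of compression given in Definition III.6, which characterises ${\downarrow}A$ as the $\leq$-least schema whose expansion contains $A$. Once this viewpoint is in hand, monotonicity is essentially automatic, in the same way that ``smallest upper bound'' constructions are always monotone in their input.

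Concretely, I would set $s := {\downarrow}A$ and $t := {\downarrow}B$ and aim to show $s \leq t$. First I would invoke Proposition III.2 applied to $B$, giving $B \subseteq {\uparrow}{\downarrow}B = {\uparrow}t$. Combining this with the hypothesis $A \subseteq B$ by transitivity of $\subseteq$ then yields $A \subseteq {\uparrow}t$. At this point, the universal property from Definition III.6 — specifically the clause ``$A \subseteq {\uparrow}r \implies {\downarrow}A \leq r$'' — can be applied with $r = t$, delivering ${\downarrow}A \leq t$, i.e.\ $s \leq t$, which is precisely the claim.

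There is essentially no obstacle here: the alternative characterisation of compression in Definition III.6 was explicitly introduced for proofs of this kind, and the argument is a two-line chase through the definitions together with a single application of Proposition III.2. The only thing to be mildly careful about is the corner case $A = \emptyset$, where ${\downarrow}A = \epsilon_*$; but since ${\uparrow}\epsilon_* = \emptyset$ is contained in every ${\uparrow}t$, we have $\epsilon_* \leq t$ for every schema $t$, so this case is handled uniformly and requires no separate treatment. If one preferred to work from Definition III.4 instead, one could give a position-by-position argument — at each index $i$, if the $i$-th symbols of all elements of $B$ agree then so do those of all elements of $A \subseteq B$, forcing $s_i = t_i$ whenever $t_i \neq *$ — but the universal-property route is cleaner and is the one I would write up.
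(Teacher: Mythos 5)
Your proof is correct and follows essentially the same route as the paper's: both use the fact that $B \subseteq {\uparrow}{\downarrow}B$, transitivity of $\subseteq$ to get $A \subseteq {\uparrow}{\downarrow}B$, and then the universal property of compression from Definition III.6 to conclude ${\downarrow}A \leq {\downarrow}B$. The remarks on the empty-set corner case and the positionwise alternative are sound but not needed.
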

\begin{proof}
Assume $A \subseteq B$ we will show ${\downarrow}A \leq {\downarrow}B$. Proposition III.6 gives $B \subseteq {\uparrow}{\downarrow}B$. Since $A \subseteq B \subseteq {\uparrow}{\downarrow}B$ the transitivity of the subset relation yields $A \subseteq {\uparrow}{\downarrow}B$. Let ${\downarrow}A = a$ and  ${\downarrow}B = b$. Since we have $A \subseteq {\uparrow}b$, definition III.6 applied to ${\downarrow}A$ yields $a \leq b$. Thus we have ${\downarrow}A \leq {\downarrow}B$.
\end{proof}
\begin{proposition}
For $A \subseteq \Sigma^l$ and $b \in \Sigma^l_*$ we have:
$$A \subseteq {\uparrow}b \implies {\downarrow}A \leq b$$
\end{proposition}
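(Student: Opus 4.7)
The plan is to apply Definition III.6 essentially verbatim. That definition states that ${\downarrow}A$ is the schema $s$ satisfying two properties: (i) $A \subseteq {\uparrow}s$, and (ii) for every $r \in \Sigma_*^l$, if $A \subseteq {\uparrow}r$ then $s \leq r$. Property (ii) is exactly the statement we want, with $r$ instantiated to the given $b$.

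So the forward-looking structure of the proof is: first, name $s := {\downarrow}A$; second, invoke Definition III.6 to recall that $s$ has the minimality property (ii); third, apply (ii) to our hypothesis $A \subseteq {\uparrow}b$ to conclude $s \leq b$, i.e.\ ${\downarrow}A \leq b$.

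There is essentially no obstacle here, since the proposition is a direct unpacking of one half of the characterising property of compression given in Definition III.6. The only subtlety worth flagging in the write-up is that one must have already established that such an $s$ exists and is unique (implicitly assumed when Definition III.6 is accepted as well-formed); given that, the proof is a single line. This proposition should be read as a companion to Proposition III.6 (which extracted property (i) from the definition) and Proposition III.7 (monotonicity), so that subsequent arguments can appeal to these basic facts without referring back to the raw definition each time.
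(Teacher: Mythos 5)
Your proof is correct, but it takes a different route from the paper's. The paper derives the result as a consequence of two earlier propositions: from $A \subseteq {\uparrow}b$, monotonicity of compression (Proposition III.7) gives ${\downarrow}A \leq {\downarrow}{\uparrow}b$, and then the identity ${\downarrow}{\uparrow}b = b$ (Proposition III.5) yields ${\downarrow}A \leq b$. You instead read the statement directly off the minimality clause of Definition III.6, instantiating $r$ to $b$ --- a single appeal to the definition with no intermediate lemmas. Both arguments are valid, and yours is arguably the more fundamental one: the paper's own proof of Proposition III.7 already invokes Definition III.6 in exactly the way you do here (``Since we have $A \subseteq {\uparrow}b$, definition III.6 applied to ${\downarrow}A$ yields $a \leq b$''), so the paper's chain of propositions is in effect a repackaging of your direct argument. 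What the paper's route buys is a derivation of each new proposition from previously established ones rather than from the raw definition; what yours buys is brevity and transparency about where the fact really comes from. Your caveat about the well-formedness of Definition III.6 (existence and uniqueness of the minimal $s$) is fair, but the same implicit assumption underlies the paper's proofs as well, so it does not disadvantage your approach.
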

\begin{proof}
We will assume $A \subseteq {\uparrow}b$ and show  ${\downarrow}A \leq b$. Proposition III.7 gives us, ${\downarrow}A \leq {\downarrow}{\uparrow}b$. Proposition III.5 yields ${\downarrow}A \leq b$.  
\end{proof}

\begin{lemma}
The compression and expansion operators form a Galois connection, where $\downarrow$ is the lower adjoint and $\uparrow$ the upper adjoint. 
\end{lemma}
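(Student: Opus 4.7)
The plan is to appeal to the three-condition characterization of a Galois connection provided at the end of Definition II.9, taking $F = {\downarrow}$ from $(\mathcal{P}(\Sigma^l),\subseteq)$ to $(\Sigma^l_*,\leq)$ and $G = {\uparrow}$ going in the opposite direction. The strategy is then simply to match each of the three required conditions to a proposition already established in this section. Before doing so, I would briefly note that the relevant underlying orders are $\subseteq$ on $\mathcal{P}(\Sigma^l)$ and $\leq$ (as defined in Definition III.5) on $\Sigma^l_*$, so that the roles of lower and upper adjoint are unambiguous.

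The verification then proceeds by direct citation. Condition (1) requires that ${\downarrow}$ be monotone: $A_1 \subseteq A_2 \implies {\downarrow}A_1 \leq {\downarrow}A_2$, which is exactly the content of Proposition III.7. Condition (2) requires $A_1 \subseteq {\uparrow}{\downarrow}A_1$ for every $A_1 \subseteq \Sigma^l$, which is exactly Proposition III.6. Finally, condition (3) asserts that $A_1 \subseteq {\uparrow}b_1 \implies {\downarrow}A_1 \leq b_1$, which is exactly Proposition III.8. With all three conditions discharged, Definition II.9 gives the Galois connection, with ${\downarrow}$ the lower adjoint and ${\uparrow}$ the upper adjoint, as claimed.

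There is essentially no obstacle to overcome: the earlier propositions have been arranged to supply each of the three conditions directly, so the proof is a pure bookkeeping exercise. The only thing requiring a small amount of care is making the identification of the two posets explicit, so that the direction of the inequality in each condition is correctly interpreted (in particular, that $F(a) \leq b$ in condition (3) corresponds to ${\downarrow}A_1 \leq b_1$ in $\Sigma^l_*$, and $a \leq G(b)$ corresponds to $A_1 \subseteq {\uparrow}b_1$ in $\mathcal{P}(\Sigma^l)$). Once this correspondence is written down, the lemma follows by invoking Propositions III.6, III.7, and III.8 in turn.
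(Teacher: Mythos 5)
Your proposal is correct and follows exactly the same route as the paper: invoke the three-condition characterization in Definition II.9 and discharge the conditions by citing Propositions III.7, III.6, and III.8 respectively. Your added care in making the two underlying posets explicit is a minor presentational improvement but does not change the argument.
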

\begin{proof}
Let $A,B \subseteq \Sigma^l$ and $s \in \Sigma^l_*$. From definition II.9, it is sufficient to show:
\begin{enumerate}
\item $A \subseteq B \implies {\downarrow}A \leq  {\downarrow}B $ 
\item $A \subseteq {\uparrow}{\downarrow}A$ 
\item $A \subseteq {\uparrow}s \implies {\downarrow}A \leq s $
\end{enumerate}
1) was shown in proposition III.7, 2) was shown in III.6 and 3) in proposition III.8 .
\end{proof}
\begin{definition}
For a set $A \subseteq \Sigma^l$, we call the process of calculating the compression on each subset of A, that is $\{{\downarrow}X | X \subseteq A\}$, the {\bf schematic completion} of $A$, denoted $\mathcal{S}(A)$. 
\end{definition}
\begin{example}
Let $\Sigma^l = \{1,0\}^3$ and $A = \{110, 100, 001, 000\}$ the schematic completion of $A$,  results in the following set: 
$$\{001, 100, 000, 110, 00*, *00, 1*0, **0, *0*, ***, \epsilon_*\}$$
For example, the schema $*00$ comes from the compression on the subset $\{110, 000\}$. 
\end{example}
\begin{theorem}
{\bf(The fundamental theorem of schemata)} For any $A \subseteq \Sigma^l$, the schematic completion of $A$, $\mathcal{S}(A)$ ordered by $\leq$ forms a complete lattice, that is the poset $(\mathcal{S}(A), \leq)$ is a complete lattice. We call this lattice the {\bf schematic lattice} of $A$. Let $X \subseteq \mathcal{S}(A)$, the supremum, ${\bigvee}X$, is given by ${\downarrow}(\bigcup\limits_{s \in X}{{\uparrow}s})$. The infimum, ${\bigwedge}Y$, is given by ${\downarrow}\{ a \in  \bigcap\limits_{s \in X}{\uparrow}s | a \in A\}$.    
\end{theorem}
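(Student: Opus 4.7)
The plan is to verify directly that the two stated formulas realise the supremum and infimum in $(\mathcal{S}(A),\leq)$, leveraging the four preparatory propositions III.5--III.8. No further closure-system theory is strictly required, though Lemma III.9 furnishes the conceptual reason the formulas work: as the lower adjoint, $\downarrow$ sends unions to suprema, so a supremum in $\mathcal{S}(A)$ is obtained by unioning expansions and then compressing; dually, infima are obtained by intersecting expansions and then compressing, except that one must restrict to $A$ beforehand to stay inside $\mathcal{S}(A)$.

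For the supremum of a set $X\subseteq\mathcal{S}(A)$, I would first show that $\downarrow\bigl(\bigcup_{s\in X}\uparrow s\bigr)$ actually belongs to $\mathcal{S}(A)$. Each $s\in X$ is of the form $s=\downarrow Y_s$ for some $Y_s\subseteq A$; set $Z=\bigcup_s Y_s\subseteq A$. From $Y_s\subseteq\uparrow\downarrow Y_s=\uparrow s$ (Proposition III.6) I get $Z\subseteq\bigcup_s\uparrow s$, hence $\downarrow Z\leq\downarrow\bigl(\bigcup_s\uparrow s\bigr)$ by monotonicity (Proposition III.7). Conversely, $s=\downarrow Y_s\leq\downarrow Z$ again by Proposition III.7, giving $\uparrow s\subseteq\uparrow\downarrow Z$ for every $s$, so $\bigcup_s\uparrow s\subseteq\uparrow\downarrow Z$, and Proposition III.8 yields $\downarrow\bigl(\bigcup_s\uparrow s\bigr)\leq\downarrow Z$. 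Thus the candidate equals $\downarrow Z$, which is the compression of a subset of $A$ and therefore lies in $\mathcal{S}(A)$. That the candidate is an upper bound follows from $\uparrow s\subseteq\bigcup_s\uparrow s$, monotonicity of $\downarrow$, and $\downarrow\uparrow s=s$ (Proposition III.5). Leastness: if $r\in\mathcal{S}(A)$ satisfies $s\leq r$ for every $s\in X$, then $\uparrow s\subseteq\uparrow r$ for each $s$, hence $\bigcup_s\uparrow s\subseteq\uparrow r$, and Proposition III.8 gives $\downarrow\bigl(\bigcup_s\uparrow s\bigr)\leq r$.

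For the infimum I would set $I=\{a\in A : a\in\bigcap_{s\in X}\uparrow s\}$. Since $I\subseteq A$, the element $\downarrow I$ lies in $\mathcal{S}(A)$ by definition of the schematic completion (the degenerate case $I=\emptyset$ produces $\epsilon_*$, which is always present). That $\downarrow I$ is a lower bound is immediate from Proposition III.8 applied to the inclusion $I\subseteq\uparrow s$ for each $s\in X$. For maximality, take any lower bound $r=\downarrow Y\in\mathcal{S}(A)$ with $Y\subseteq A$; since $r\leq s$ for all $s$, we have $\uparrow r\subseteq\uparrow s$ for all $s$, and by Proposition III.6, $Y\subseteq\uparrow r\subseteq\bigcap_s\uparrow s$. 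Combined with $Y\subseteq A$ this gives $Y\subseteq I$, and monotonicity of $\downarrow$ yields $r=\downarrow Y\leq\downarrow I$.

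The main obstacle is the closure check for the supremum: one must realise $\downarrow\bigl(\bigcup_s\uparrow s\bigr)$ as the compression of some subset of $A$ rather than of an arbitrary subset of $\Sigma^l$. This is where the witness sets $Y_s$ and the two-sided application of monotonicity together with Proposition III.8 do the real work. The infimum is easier because intersecting with $A$ at the outset forces the candidate into $\mathcal{S}(A)$; the rest of the argument is essentially formal bookkeeping with the Galois identities of Lemma III.9.
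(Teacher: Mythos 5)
Your proof is correct, but it takes a genuinely different route from the paper's. The paper argues top-down through the general machinery: Lemma III.9 gives the Galois connection, Proposition II.8 makes ${\uparrow}{\downarrow}$ a closure operator, Theorem II.5 and Proposition II.6 then deliver a complete lattice of closed sets with explicit supremum ${\uparrow}{\downarrow}\bigcup Y$ and infimum $\bigcap Y$, and a final sentence transfers everything to $(\mathcal{S}(A),\leq)$ via the ordering on schemata. You instead verify the two stated formulas directly from Propositions III.5--III.8, checking membership in $\mathcal{S}(A)$, the bound property, and optimality in each case. What each approach buys: the paper's is shorter and places the result inside standard closure-system theory, but it is thin exactly where the content lies --- the family $\{{\uparrow}{\downarrow}X \mid X\subseteq A\}$ ranges only over subsets of $A$, so Theorem II.5 does not apply verbatim and this family need not be intersection-closed inside $\mathcal{P}(\Sigma^l)$ (the restriction ``$a\in A$'' in the infimum formula is precisely the repair), and the final transfer to schemata, including the supremum formula ${\downarrow}(\bigcup_{s\in X}{\uparrow}s)$, is asserted rather than derived. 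Your argument makes these points explicit: the witness sets $Y_s\subseteq A$ and the two-sided monotonicity argument show the supremum candidate really is the compression of a subset of $A$, and intersecting with $A$ before compressing is exactly what keeps the infimum inside $\mathcal{S}(A)$. The cost is that you forgo the conceptual packaging; the gain is a self-contained verification that is more rigorous on the step the paper glosses over.
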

\begin{proof}
Lemma III.9 tells us $\downarrow$ and $\uparrow$ form a Galois connection, where $\downarrow$ is the lower adjoint and $\uparrow$ is the upper adjoint. As such, proposition II.8 yields ${\uparrow}{\downarrow}$ as a closure operator. Hence, from Theorem II.5,  $\{{\uparrow}{\downarrow} X| X \subseteq A\}$ is a closure system. Thus, the poset $(\{{\uparrow}{\downarrow} X| X \subseteq A\}, \subseteq)$ forms a complete Lattice (proposition II.6). Proposition II.6 also yields for $Y \subseteq \{{\uparrow}{\downarrow} X| X \subseteq A\}$, the supremum is given by ${\uparrow}{\downarrow}\bigcup{Y}$ and the infimum, $\bigcap{Y}$.  From the definition of ordering over schemata, we then have $(\{{\downarrow} X| X \subseteq A\}, \leq)$ as a complete lattice and the infimum, ${\bigwedge}Y$, is given by ${\downarrow}\{ a \in  \bigcap\limits_{s \in X}{\uparrow}s | a \in A\}$.  
\end{proof}

It is easy to check that the atoms of the schematic lattice $(\mathcal{S}(A), \leq)$ is the set $A$ and $(\mathcal{S}(A), \leq)$ is atomistic.
\begin{example}
Continuing the above example, the schematic lattice formed from schematic completion on A can be seen in figure 5.

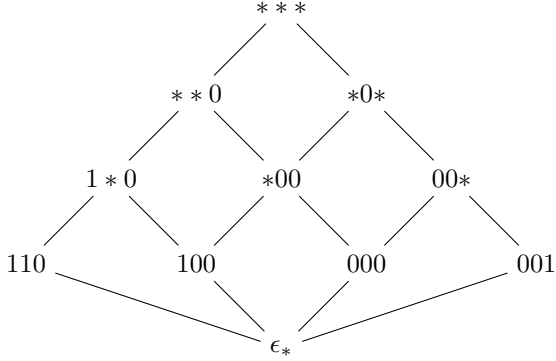
\begin{figure}[h!]
\centering
\begin{tikzpicture}[node distance=1.6cm]
\node(0)                           {$***$};
\node(1)      [below left of=0]  {$**0$};
\node(2)      [below right of= 0] {$*0*$};

\node(3)       [below left of =1] {$1*0$};
\node(4)       [below right of =1] {$*00$};
\node(5)       [below right of =2] {$00*$};

\node(6)       [below left of =3] {$110$};
\node(7)       [below right of =3] {$100$};
\node(8)       [below right of =4] {$000$};
\node(9)       [below right of =5] {$001$};

\node(10)     [below left of =8] {$\epsilon_*$};

\draw(0)       -- (1);
\draw(0)       -- (2);

\draw(1)       -- (3);
\draw(1)       -- (4);

\draw(2)       -- (4);
\draw(2)       -- (5);

\draw(3)       -- (6);
\draw(3)       -- (7);
\draw(4)       -- (7);
\draw(4)       -- (8);
\draw(5)       -- (8);
\draw(5)       -- (9);

\draw(6)       -- (10);
\draw(7)       -- (10);
\draw(8)       -- (10);
\draw(9)       -- (10);
\end{tikzpicture}
\caption{The schematic lattice formed by the schematic completion on the set $A = \{110, 100, 001, 000\}$ ordered by $\leq$, that is the complete lattice $(\mathcal{S}(A),\leq)$}
\end{figure}
\end{example}
\begin{example}
Of course, the schematic completion is not restricted to words over the binary alphabet. Let $\Sigma= \{a,b,c,\dots, z\}$ and consider the set $X \subseteq \Sigma^4$:
$$X := \{help,kelp,yell,tell,talk,walk\}$$
The schematic completion of this set gives us the schematic lattice in figure 6. 
\begin{figure}[htp]
\centering
\begin{tikzpicture}[node distance=1.4cm]
\node(0)                           {$**l*$};

\node(1)      [below left of=0]  {$*el*$};

\node(2)      [below left of=1]  {$*elp$};
\node(3)      [right of=2]  {$*ell$};
\node(4)      [right of=3]  {$t*l*$};
\node(5)      [right of=4]  {$*alk$};

\node(6)      [below left of=2]  {$help$};
\node(7)      [right of=6]  {$kelp$};
\node(8)      [right of=7]  {$yell$};
\node(9)      [right of=8]  {$tell$};
\node(10)     [right of=9]  {$talk$};
\node(11)      [right of=10]  {$walk$};
book.com/
\node(12)      [below of=8]  {$\epsilon_*$};

\draw(0)       -- (1);
\draw(0)       -- (4);
\draw(0)       -- (5);
\draw(1)       -- (2);
\draw(1)       -- (3);
\draw(2)       -- (6);
\draw(2)       -- (7);
\draw(3)       -- (8);
\draw(3)        -- (9);
\draw(4)        -- (9);
\draw(4)        -- (10);
\draw(5)        -- (10);
\draw(5)        -- (11);

\draw(12)        -- (6);
\draw(12)        -- (7);
\draw(12)        -- (8);
\draw(12)        -- (9);
\draw(12)        -- (10);
\draw(12)        -- (11);
\end{tikzpicture}
\caption{The complete lattice formed from the schematic completion on $X$, that is $(\mathcal{S}(X),\leq)$.  Notice, as each word in $X$ has an $l$ in the 3rd position the unit element of this lattice is $**l*$. }
\label{fig:4}
\end{figure}
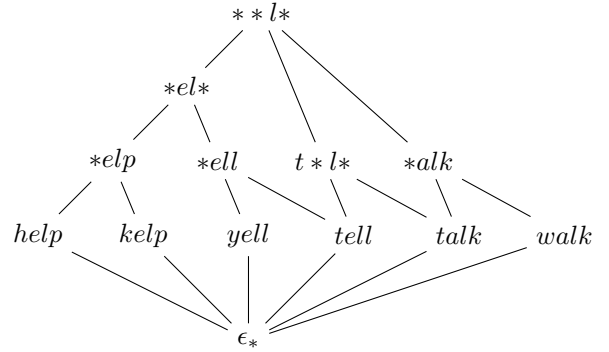
\end{example}
We can more precisely define some of the original properties of schema using the above definitions.  
\begin{definition}
The {\bf order} (not to be confused with partial order) of schema $s \in \Sigma^l_*$ is the number of fixed symbols in $s$ and is denoted $\mathbf{o}(s)$. The order of $s$ can be equivalently defined as:
$$\mathbf{o}(s) := l - \log_{|\Sigma|}(|{\uparrow}s|)$$
Similarly the {\bf antiorder}, denoted $\mathbf{o'}$ of $s$ is the number of wild card symbols in $s$, which can be defined as:
$$\mathbf{o'}(s) := \log_{|\Sigma|}(|{\uparrow}s|)$$
\end{definition}
\begin{example}
Let s = $11**$. We can count the number of fixed symbols to give us $\mathbf{o}(s) = 2$. Equivalently $l - log_{|\Sigma|}(|{\uparrow}s|) = 4- log_{2}(|\{1100,1110,1101,1111\}|) = 4 - log_{2}(4) = 2$.  
\end{example}
\begin{proposition}
If we set, $\mathbf{o'}(\epsilon_*) = -1$ then $\mathbf{o'}$ is a rank function over schemata. 
\end{proposition}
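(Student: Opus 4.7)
The plan is to verify the two axioms of a rank function (monotonicity with respect to $\leq$, and the covering condition $s \prec t \implies \mathbf{o'}(t) = \mathbf{o'}(s)+1$) for $\mathbf{o'}$ on the poset $(\Sigma^l_*, \leq)$, with the convention $\mathbf{o'}(\epsilon_*) = -1$. The key observation I will lean on throughout is that for any non-empty schema $s$ with $k$ wild cards, $|{\uparrow}s| = |\Sigma|^k$, so $\mathbf{o'}(s) = k$ really just counts wild-card positions. The $\epsilon_*$ case is slightly delicate and will need separate handling since $|{\uparrow}\epsilon_*| = 0$ and the logarithm is undefined there.

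For monotonicity, I would start from the definition $a \leq b \iff {\uparrow}a \subseteq {\uparrow}b$. When both $s$ and $t$ differ from $\epsilon_*$, the inclusion gives $|{\uparrow}s| \leq |{\uparrow}t|$ and hence $\mathbf{o'}(s) = \log_{|\Sigma|}|{\uparrow}s| \leq \log_{|\Sigma|}|{\uparrow}t| = \mathbf{o'}(t)$ because $\log_{|\Sigma|}$ is monotone for $|\Sigma| \geq 2$. For the boundary case, $\epsilon_* \leq t$ always holds (since $\emptyset \subseteq {\uparrow}t$) and $\mathbf{o'}(\epsilon_*) = -1 \leq 0 \leq \mathbf{o'}(t)$, so monotonicity is preserved at the bottom.

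For the covering condition I would first characterise $\prec$ explicitly. Using the description of $\leq$ via fixed-position agreement (namely, $s \leq t$ iff $t$'s fixed positions form a subset of $s$'s fixed positions and agree with $s$ there), a non-$\epsilon_*$ schema $s$ is covered by exactly those $t$ obtained by flipping one of $s$'s fixed symbols to $*$: any intermediate $r$ with $s < r < t$ would have strictly between $k$ and $k+1$ wild cards, which is impossible. Hence $\mathbf{o'}(t) = k+1 = \mathbf{o'}(s)+1$. For the remaining case where $s = \epsilon_*$, an upper neighbour $t$ must satisfy $\emptyset = {\uparrow}\epsilon_* \subsetneq {\uparrow}t$ with no schema strictly between, which forces $|{\uparrow}t| = 1$, i.e. $t$ is a fully fixed word with zero wild cards; then $\mathbf{o'}(t) = 0 = -1 + 1 = \mathbf{o'}(\epsilon_*) + 1$.

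The main obstacle I anticipate is the $\epsilon_*$ boundary case: one has to confirm both that its upper neighbours are exactly the fully fixed words (so that no schema sneaks in between the empty schema and a single-word schema) and that the ad hoc value $-1$ really is the unique choice making the covering axiom hold there. Everything else reduces to the elementary fact that wild-card counts differ by exactly one across a cover, which follows cleanly from the fixed-position characterisation of $\leq$.
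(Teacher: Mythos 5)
Your proof is correct, and at the top level it follows the same skeleton as the paper's: verify the two rank-function axioms separately. The monotonicity half is essentially identical (inclusion of expansions gives an inequality of cardinalities, hence of logarithms). Where you genuinely diverge is in the covering step. The paper disposes of it in one line by asserting that $x \preceq y$ implies $|{\uparrow}y| = |{\uparrow}x| + |\Sigma|$ --- a relation that is in fact wrong as written (for a non-empty $x$ with $k$ wild cards a cover multiplies the expansion size by $|\Sigma|$, giving $|{\uparrow}y| = |\Sigma|\cdot|{\uparrow}x|$, which coincides with the additive formula only when $k=1$), and it never addresses the case $x = \epsilon_*$ at all, even though that is precisely the case the ad hoc value $\mathbf{o'}(\epsilon_*) = -1$ exists to repair. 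Your route --- characterising the covers of a non-empty schema as exactly the schemata obtained by turning one fixed symbol into a wild card (via the fixed-position description of $\leq$ from Proposition III.19), and separately identifying the upper neighbours of $\epsilon_*$ as the fully fixed words --- is the honest way to obtain the $+1$, and it buys you a correct and complete argument where the paper's version has both a computational slip and a missing boundary case. The only cost is that you must first establish the cover characterisation, but that is a short combinatorial argument and you sketch it adequately.
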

\begin{proof}
To show $\mathbf{o'}$ is a rank function over schemata, it is sufficient to show for $x,y \in \Sigma^l_*$:
$$\mbox{1.) } x \leq y \implies \mathbf{o'}(x) \leq \mathbf{o'}(y)$$
$$\mbox{2.) } x \preceq y \implies \mathbf{o'}(y) = \mathbf{o'}(x) +1$$
First we will show 1). Assume $x \leq y$, using the definition of ordering we have ${\uparrow}x \subseteq {\uparrow}y$. It follows then that $|{\uparrow}x| \leq |{\uparrow}y|$, and thus $\mathbf{o'}(x) \leq \mathbf{o'}(y)$.
Now we show 2). Assume $x \preceq y$ we then have ${\uparrow}x \subset {\uparrow}y$ with $|{\uparrow}y| = |{\uparrow}x| + |\Sigma|$, thus $\mathbf{o'}(y) = \mathbf{o'}(x) +1$. 
\end{proof}
\begin{corollary}
Using the same method, it is possible to show that the order of a schema is a dual rank function over schemata if we make $\mathbf{o}(\epsilon_*) = l+1$. That is for  $x,y \in \Sigma^l_*$: 
$$\mbox{1.) } x \leq y \implies \mathbf{o}(x) \geq \mathbf{o}(y)$$
$$\mbox{2.) } x \preceq y \implies \mathbf{o}(x) = \mathbf{o}(y) +1$$
\end{corollary}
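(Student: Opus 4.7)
The plan is to mirror the proof of the preceding proposition (establishing that $\mathbf{o'}$ is a rank function) almost verbatim, leveraging the simple identity $\mathbf{o}(s)+\mathbf{o'}(s)=l$ on all of $\Sigma^l_*$. First I would verify this identity holds: for $s\neq\epsilon_*$ it is immediate from the two logarithmic definitions, and for the empty schema the conventions $\mathbf{o}(\epsilon_*)=l+1$ and $\mathbf{o'}(\epsilon_*)=-1$ are evidently chosen precisely so that their sum is $l$ there as well. With this identity in hand, both properties transfer mechanically from $\mathbf{o'}$ to $\mathbf{o}$, the minus sign flipping the monotonicity direction in (1) while preserving the unit gap in (2).

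Concretely, for property (1), assume $x\leq y$. The preceding proposition gives $\mathbf{o'}(x)\leq\mathbf{o'}(y)$, and substituting $\mathbf{o'}(s)=l-\mathbf{o}(s)$ yields $l-\mathbf{o}(x)\leq l-\mathbf{o}(y)$, i.e., $\mathbf{o}(x)\geq\mathbf{o}(y)$. For property (2), assume $x\preceq y$. The preceding proposition yields $\mathbf{o'}(y)=\mathbf{o'}(x)+1$, and the same substitution gives $l-\mathbf{o}(y)=l-\mathbf{o}(x)+1$, i.e., $\mathbf{o}(x)=\mathbf{o}(y)+1$.

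Alternatively, one could give a self-contained proof mirroring the structure of the antiorder argument: for (1), $x\leq y$ means ${\uparrow}x\subseteq{\uparrow}y$, so $\log_{|\Sigma|}(|{\uparrow}x|)\leq\log_{|\Sigma|}(|{\uparrow}y|)$, and subtracting from $l$ reverses the inequality; for (2), a cover $x\preceq y$ corresponds to replacing exactly one fixed symbol of $x$ by a wild card, so $|{\uparrow}y|=|\Sigma|\cdot|{\uparrow}x|$ and hence $\log_{|\Sigma|}(|{\uparrow}y|)=\log_{|\Sigma|}(|{\uparrow}x|)+1$, giving $\mathbf{o}(y)=\mathbf{o}(x)-1$.

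The only real subtlety—what I would call the main obstacle, though it amounts to careful bookkeeping—is the boundary case involving $\epsilon_*$. One must check that when $\epsilon_*\preceq y$, so that $y$ is an atom consisting of a single length-$l$ word with $\mathbf{o}(y)=l$, the stipulated convention $\mathbf{o}(\epsilon_*)=l+1$ is exactly what is needed to make $\mathbf{o}(\epsilon_*)=\mathbf{o}(y)+1$ hold. This is precisely why the convention is set to $l+1$ rather than, say, $l$ or $\infty$, and it is the dual counterpart of the $-1$ convention for $\mathbf{o'}(\epsilon_*)$ that makes property (2) of the preceding proposition work at the bottom of the lattice.
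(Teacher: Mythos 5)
Your proof is correct and matches the paper's intent: the corollary is stated with no separate proof beyond the instruction to ``use the same method'' as the proposition for $\mathbf{o'}$, and both your routes---the reduction via the identity $\mathbf{o}(s)=l-\mathbf{o'}(s)$ (which the conventions $\mathbf{o}(\epsilon_*)=l+1$, $\mathbf{o'}(\epsilon_*)=-1$ preserve) and the self-contained mirror of the antiorder argument---deliver exactly that. Your handling of the boundary case $\epsilon_*\preceq y$ is a welcome extra check that the paper leaves implicit.
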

\begin{definition}
A word $a \in \Sigma^l$ is said to be an {\bf instance} of schema $s$ if and only if $a \in {\uparrow}s$.
\end{definition}
\begin{example}
The word $111$ is an instance of the schema $1**$ as $111 \in {\uparrow}1** = 111 \in \{111,110,101,100\}$  
\end{example}
We now introduce some novel properties of schemata not originally described in previous works.
\begin{definition}
For some $A \subseteq \Sigma^l$, the {\bf confidence} of a schema $s \in \mathcal{S}(A)$ is given as:
$$ \frac{|\{a\in {\uparrow}s | a \in A  \}|}{|{\uparrow}s|}$$
In more simple terms, the confidence of $s \in \mathcal{S}(A)$ is the proportion of ${\uparrow}s$ that is found in $A$.
\end{definition}
The confidence of a schema is useful in GAs for understanding how confident one can be in the fitness assigned to a schema. In particular, the more instances of a schema has, the more we can trust it's fitness.
\\
\\
The following lemmas and proposition are useful for computations over schemata as they allow us to determine the ordering of schemata by considering only the characters and wild cards rather than having to compute the expansion explicitly. 
\begin{lemma}
Let $A,B \subseteq \Sigma^l$,  $a = {\downarrow}A$ and $b = {\downarrow}B$.  $A \subseteq B$ if and only if for all $i$ we have:
$$a_i = b_i \text{ or } b_i = *$$
\end{lemma}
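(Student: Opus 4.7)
The plan is to prove the equivalence one direction at a time, using monotonicity of compression (Proposition III.7) and the containment $A \subseteq {\uparrow}{\downarrow}A$ (Proposition III.6) as the main tools, together with the explicit definition of compression (Definition III.4), which says that $({\downarrow}X)_i$ is the common symbol at position $i$ of all words in $X$ if one exists, and $*$ otherwise.

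For the forward direction, I would assume $A \subseteq B$ and argue position by position. Fix an index $i$ with $b_i \neq *$. Since $b = {\downarrow}B$ and $b_i$ is a fixed symbol, Definition III.4 says every word in $B$ carries $b_i$ at position $i$. The assumption $A \subseteq B$ then forces every word in $A$ to agree with $b_i$ at position $i$ as well, and applying Definition III.4 to $a = {\downarrow}A$ gives $a_i = b_i$, as required.

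For the reverse direction, I would assume the position-wise condition and take an arbitrary $x \in A$. By Proposition III.6, $x \in {\uparrow}{\downarrow}A = {\uparrow}a$, so $x_i = a_i$ whenever $a_i$ is fixed. At every position $i$ with $b_i \neq *$ the hypothesis forces $a_i = b_i$ (in particular $a_i$ is fixed), and hence $x_i = b_i$ there. This shows $x \in {\uparrow}b$, i.e., $A \subseteq {\uparrow}b = {\uparrow}{\downarrow}B$.

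The main obstacle will be the final step of the reverse direction: upgrading $A \subseteq {\uparrow}{\downarrow}B$ to $A \subseteq B$. Because $B$ and its schematic closure ${\uparrow}{\downarrow}B$ can differ strictly (Proposition III.6 only guarantees $B \subseteq {\uparrow}{\downarrow}B$), the reverse implication as literally stated does not follow from the position-wise condition alone; for instance, taking $B = \{00,01,10\}$ gives $b = **$, so $A = \{11\}$ satisfies the position-wise condition yet $A \not\subseteq B$. I would therefore expect the intended conclusion of the lemma to live at the closure level --- namely, $a \leq b$, equivalently ${\uparrow}{\downarrow}A \subseteq {\uparrow}{\downarrow}B$ --- in which case the argument above supplies both directions and the lemma becomes a purely symbolic test for the schema ordering that avoids enumerating expansions explicitly.
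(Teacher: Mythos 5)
Your forward direction is correct and is essentially the content of the paper's own (very terse) proof, though your version is cleaner: the paper argues via $C = B\setminus A$ and elides the case $a_i = *$, whereas fixing a position with $b_i \neq *$ and propagating agreement from $B$ down to the subset $A$ handles every case uniformly. More importantly, your diagnosis of the reverse direction is right: the lemma as printed is false. Your counterexample $B=\{00,01,10\}$, $b={\downarrow}B=**$, $A=\{11\}$ satisfies the position-wise condition but not $A\subseteq B$, and the paper's one-line proof offers no argument for the converse at all --- it only treats the forward implication. What saves the paper downstream is that Lemma III.18 is only ever invoked (in the proof of Proposition III.19) in exactly the weakened form you propose: in the forward direction as stated, and in the reverse direction only to produce \emph{some} pair $A\subseteq B$ with ${\downarrow}A=a$ and ${\downarrow}B=b$ (one may take $A={\uparrow}a$, $B={\uparrow}b$), which amounts to concluding $a\leq b$ rather than $A\subseteq B$ for the given sets. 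Your reverse argument --- showing $A\subseteq{\uparrow}b$ and hence ${\downarrow}A\leq b$ --- proves precisely that corrected statement. So your proposal is not merely an alternative route; it identifies and repairs a genuine defect in the statement that the paper's own proof glosses over.
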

\begin{proof}
This follows directly from the definition of compression. Let $C = B\setminus A$. If $C$ is empty, then clearly $a_i = b_i$ for all $i$, otherwise, for any $x \in C$ and any $i$, $x_i$ can either differ from $a_i$, thus ${b_i} = *$ or $x_i$ can equal $a_i$, thus $b_i = a_i$.  
\end{proof}
\begin{proposition}
For $a,b \in \Sigma^l_*$, $a \leq b$ if and only if for all $i$
$$a_i = b_i \text{ or } b_i = *$$
\end{proposition}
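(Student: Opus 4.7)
The plan is to reduce Proposition III.11 to Lemma III.10 by choosing the right sets. The key observation is that every schema $s$ is already the compression of its own expansion, by Proposition III.5: ${\downarrow}{\uparrow}s = s$. This lets me plug schemata directly into a statement that was phrased about arbitrary subsets of $\Sigma^l$.

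Concretely, given $a, b \in \Sigma^l_*$, I would set $A := {\uparrow}a$ and $B := {\uparrow}b$. By Proposition III.5, $a = {\downarrow}A$ and $b = {\downarrow}B$, so $A$ and $B$ satisfy the hypotheses of Lemma III.10 with compressions equal to $a$ and $b$ respectively. Lemma III.10 then gives
\[
A \subseteq B \iff \bigl(a_i = b_i \text{ or } b_i = *\text{, for all } i\bigr).
\]
By Definition III.5 of the ordering on schemata, $A \subseteq B$ is exactly $a \leq b$, so the equivalence transfers directly to the desired statement.

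Both directions come out of the same application of Lemma III.10. For the forward direction ($a \leq b$ implies the character condition), I unfold $a \leq b$ into ${\uparrow}a \subseteq {\uparrow}b$, i.e.\ $A \subseteq B$, and invoke the lemma. For the reverse direction, the character condition yields $A \subseteq B$ via the lemma, which is by definition $a \leq b$.

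I do not anticipate a genuine obstacle here: the only subtlety is making sure that Lemma III.10 really applies when $A$ and $B$ are taken to be expansions of schemata, which is guaranteed by Proposition III.5. The edge case to double-check is $a = \epsilon_*$ or $b = \epsilon_*$, where ${\uparrow}\epsilon_* = \emptyset$; in that case ${\uparrow}a \subseteq {\uparrow}b$ holds trivially when $a = \epsilon_*$, and the character-wise statement should be interpreted vacuously, but otherwise the argument goes through unchanged.
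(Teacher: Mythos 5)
Your proof follows essentially the same route as the paper's: both reduce the proposition to the preceding character-wise lemma about compressions (the paper's Lemma III.18) by way of Proposition III.5. The forward direction is identical to the paper's. In the reverse direction you are actually more economical: the paper extracts from the lemma the existence of witness sets $A \subseteq B$ with ${\downarrow}A = a$ and ${\downarrow}B = b$, then chains $A \subseteq B \subseteq {\uparrow}b$ through Proposition III.6 and transitivity before concluding (via a final step that should really cite Proposition III.8), whereas you instantiate the lemma once at $A = {\uparrow}a$, $B = {\uparrow}b$ and read off the whole biconditional. Your handling of the $\epsilon_*$ edge case is also fine.

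There is, however, a genuine gap in the reverse direction, one you inherit from the paper rather than introduce. The lemma's ``if'' direction, read as you read it --- a statement about \emph{arbitrary} $A,B$ with the given compressions --- is false: take $a = 00$, $b = **$, $A = \{00\}$, $B = \{01,10,11\}$; then ${\downarrow}A = a$, ${\downarrow}B = b$, and $b_i = *$ for all $i$, yet $A \not\subseteq B$. The paper's own proof of that lemma only argues the ``only if'' direction. The single instance of the converse your argument needs, $A = {\uparrow}a$ and $B = {\uparrow}b$, is true --- but it is precisely the statement of the proposition you are proving, so citing the lemma for it is circular. The robust repair is to prove the reverse direction directly from the definition of expansion: assume $a_i = b_i$ or $b_i = *$ for all $i$, and let $w \in {\uparrow}a$. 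For each $i$, either $b_i = *$ and there is nothing to check, or $b_i$ is fixed and $a_i = b_i$, in which case $a_i$ is fixed and the membership $w \in {\uparrow}a$ forces $w_i = a_i = b_i$. Hence $w \in {\uparrow}b$, giving ${\uparrow}a \subseteq {\uparrow}b$, i.e.\ $a \leq b$, with no appeal to the lemma at all.
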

\begin{proof}
Let $a,b$ be any schema in $\Sigma_*^l$.\\
($\implies$) We will assume $a \leq b$ and show $a_i = b_i \text{ or } b_i = *$ for all $i$. 
From the definition of order over schema we have ${\uparrow}a \subseteq {\uparrow}b$, let $a' = {\downarrow}{\uparrow}a$, $b' = {\downarrow}{\uparrow}b$, since ${\uparrow}a \subseteq {\uparrow}b$ lemma III.18 yields
$$a'_i = b'_i \text{ or } b'= *$$
for all $i$.
Proposition III.5 gives us  ${\downarrow}{\uparrow}a = a$ and ${\downarrow}{\uparrow}b=b$, meaning $a = a'$ and $b=b'$, thus:
$$a_i = b_i \text{ or } b_i = *$$
for all i.\\
($\impliedby$) We will assume $(a_i = b_i$ or $b_i = * \text{ for all i})$  and show $a \leq b$. From Lemma III.18, there exist $A,B \subset \Sigma^l$ with ${\downarrow}A = a$ and ${\downarrow}B = b$ such that $A \subseteq B$.\\
Proposition III.6 yields: $B \subseteq {\uparrow}b$. Since we have $A \subseteq B$ and $B \subseteq {\uparrow}b$, the transitivity of the subset relation yields  $A \subseteq {\uparrow}b$. As ${\downarrow}A = {\uparrow} a$, we have ${\uparrow}a \subseteq {\uparrow}b$.
\end{proof}
This concludes out order theoretical interpretation of schemata. However, for brevity's sake, many interesting many properties regarding schemata have not been mentioned here. For example a link to a area of mathematics called Formal Concept Analysis, which is concerned with finding concept hierarchies in object-feature relationships \cite{ganter1999formal}.  Secondly, as schemata are a (small) subset of regular expressions,  this makes the schematic completion akin to the induction of regular languages \cite{miller1957pattern, solomonoff1959new} where the search space is known to be a complete lattice \cite{dupont1994search}.
\\
\\
In the subsequent section it is shown how these simple insights into schemata, specifically the schematic completion and the schematic lattice, can be used to study how combining schemata explores the search space.

\section{How does combining schemata explore the search space?}
It is common to visualize the search space of GAs as a hypercube in which each schema defines a hyperplane \cite{whitley1994genetic}. 
In this section, we offer a complementary view based on the mathematics in the previous section. We show that the search space of schemata is a complete lattice and that each generation of a GA samples a complete sublattice of this complete lattice. Finally, we consider how combining schemata explores the search space.
\\
\\
For a GA working on binary strings (that is, $\Sigma =\{0,1\}$) of size $n$, the search space of all schemata is the set $\Sigma^n_*$. If we order this set the search space is revealed to be the complete lattice $(\Sigma^n_*, \leq)$. We could also construct the search space by using the schematic completion on the set of all possible words of length $n$, meaning: $$(\Sigma^n_*, \leq) = (\mathcal{S}(\Sigma^n), \leq)$$
Figure 8 shows the search space of schemata for GAs working on binary strings of length $3$. 
\\
\\
Given a generation $G_t$ at time $t$ of a GA, the schematic completion on $G_t$, $\mathcal{S}(G_t)$, yields at least a subset of the schemata being tested by $G_t$. However, it is still unclear if the schematic completion on $G_t$ yields {\it all} the schemata being tested by $G_t$. To explore this possible limitation, consider the population
$$G_t = \{1010,1111,1100,1000\},$$
The schematic completion on  $G_t$ returns the set: 
$$\{1111,1010,1100,1000, 1*1*,11**,10*0, 1*00, 1**0, 1***, \epsilon_*\}$$
Which forms the schematic lattice shown below in figure 7. 

\begin{figure}[h!]
\centering
\begin{tikzpicture}[node distance=1.6cm]
\node(0)                           {$1***$};
\node(1)      [below right of=0]  {$1**0$};

\node(2)      [below right of= 1] {$1*00$};
\node(3)      [left of= 2] {$10*0$};
\node(4)      [left of= 3] {$11**$};
\node(5)      [left of= 4] {$1*1*$};
\node(6)       [below of =2] {$1000$};
\node(7)       [left of = 6] {$1100$};
\node(8)       [left of =7] {$1010$};
\node(9)       [left of =8] {$1111$};
\node(10)     [below right of =8] {$\epsilon_*$};
\draw(0)       -- (1);
\draw(0)       -- (4);
\draw(0)       -- (5);

\draw(1)       -- (2);
\draw(1)       -- (3);

\draw(2)       -- (6);
\draw(2)       -- (7);

\draw(3)       -- (6);
\draw(3)       -- (8);
\draw(4)       -- (7);
\draw(4)       -- (9);
\draw(5)       -- (8);
\draw(5)       -- (9);

\draw(6)       -- (10);
\draw(7)       -- (10);
\draw(8)       -- (10);
\draw(9)       -- (10);
\end{tikzpicture}
\caption{The complete lattice formed from the schematic completion on the population $G_t$. That is, the lattice $(\mathcal{S}(G_t),\leq)$ }
\end{figure}
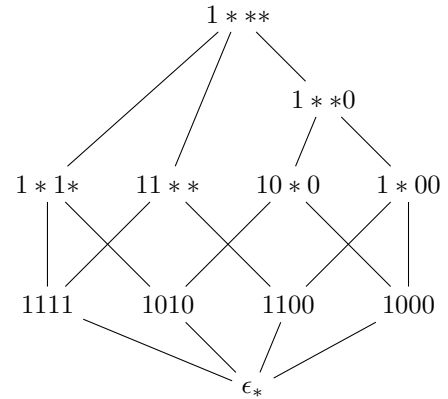
Is $\mathcal{S}(G_t)$ the set of {\it all} schemata being tested by population $G_t$? It is possible to argue that the schema $**00$ is being tested by $G_t$ as the individuals $1100$ and $1000$ both end in $00$ yet the schematic completion on $G_t$ does not yield $**00$. Thus it follows that the schematic completion does not return the complete set of all the schemata being tested by $G_t$. However, we can see that whenever an individual ends in $00$, it also begins with $1$. Thus $**00$ cannot be tested without having a $1$ in the beginning (as both $1100$ and $1000$ begin with a $1$), hence the schema which is being sampled by this population {\it is} $1*00$ which {\it does} appear in the schematic lattice. Indeed, as the fitness of a schema is given by the average of it's instances the fitness of $**00$ and $1*00$ have the same fitness, however $1*00$ is the accurate description of the schema being tested.  A similar argument can be made for any schema which appears to be omitted, thus, the schematic completion returns {\it all} schemata being sampled by $G_t$. Each generation of a GA then defines a set of schemata, $\mathcal{S}(G_t)$, which is the set of schemata being sampled by $G_t$. When $\mathcal{S}(G_t)$ is ordered by $\leq$, it is a complete sublattice of the search space $(\Sigma_*^n,\leq)$.
\\
\\
By calculating the schematic completion on the population for each generation, one can observe how schemata change during the course of a GA.
There are many natural experiments and questions which can be examined using this method (for example, one could test how well various schema theorems apply to a GA with a finite population size), however in this section we will focus on the more fundamental question of how combining schemata explores the search space.
\\
\\
It is proposed that a GA combines `good' schemata, however it is unknown how directly combining schemata explores the space of all schemata. To investigate, a method is required to identify when a schema results from the combination of a set of schemata. To do this we introduce the notion of {\bf schematic blending.}
\begin{definition}
For $X \subseteq \Sigma^l_*$ the {\bf schematic blending} of $X$, denoted $bl(X)$ is the schema given by 
$$bl(X) :={\downarrow}(\bigcap\limits_{s \in X}{\uparrow}s)$$
 If $bl(X)$ returns the empty schema then $X$ is said to be {\bf unblendable}, otherwise $X$ is said to be {\bf blendable}. Given a set of schemata $A$, the set of all schematic blends in $A$, that is $\{bl(x) | x \subseteq A\}$ is denoted $\mathcal{B}(A)$.
\end{definition}
\begin{example}
$bl(11***,**11*) = 1111*$, $11***$ and $01***$ are unblendable, $bl(1*10**,*110*1) = 110*1$.
\end{example}
It is clear that the order of the schematic blend of $A$, (if it is not empty) is greater than or equal to the order of any member of $A$. In addition, when performing the schematic blend on schemata which result from the schematic completion on a set of words, it is common for the blend  to result in schemata which already exist in schematic completion. Indeed, the following lemma explores this idea and is useful for understanding how combining schemata explores the schema space.
\begin{lemma}
{\bf (The schematic blending lemma)} For a set of schemata $A \subseteq \Sigma^l_*$, $bl(A)$ returns the largest schema $s \in \Sigma^l_*$ such that $s \leq s'$ for all $s'$ in $A$.
\end{lemma}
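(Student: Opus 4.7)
The plan is to unfold the definition of $bl(A)$ and show it satisfies the two defining properties of a greatest lower bound of $A$ in the schema order $\leq$: namely, (i) $bl(A) \leq s'$ for every $s' \in A$, and (ii) if $r \in \Sigma^l_*$ satisfies $r \leq s'$ for all $s' \in A$, then $r \leq bl(A)$. Both parts should follow essentially immediately from the Galois connection between $\downarrow$ and $\uparrow$ established in Lemma III.9 and the auxiliary propositions III.5--III.8.

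First I would introduce the abbreviation $B := \bigcap_{s \in A} {\uparrow}s \subseteq \Sigma^l$, so that by definition $bl(A) = {\downarrow}B$. For (i), observe that $B \subseteq {\uparrow}s'$ for each $s' \in A$ by construction. Applying Proposition III.8 to each such $s'$ then gives ${\downarrow}B \leq s'$, i.e.\ $bl(A) \leq s'$, establishing that $bl(A)$ is a lower bound of $A$ in $(\Sigma^l_*, \leq)$.

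For (ii), suppose $r \in \Sigma^l_*$ is any lower bound, so $r \leq s'$ for all $s' \in A$. By Definition III.5, this means ${\uparrow}r \subseteq {\uparrow}s'$ for every $s' \in A$, and taking intersection over $A$ yields ${\uparrow}r \subseteq B$. Now Proposition III.7 (monotonicity of compression) gives ${\downarrow}{\uparrow}r \leq {\downarrow}B = bl(A)$, and Proposition III.5 rewrites the left-hand side as $r$. Hence $r \leq bl(A)$, which is exactly the maximality property.

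I do not expect any real obstacle here: the statement is essentially the assertion that the lower adjoint of a Galois connection, applied to the intersection of a family of upward expansions, yields the meet of that family in the target order, which is a standard consequence of adjointness. The only subtle point worth remarking on in the write-up is the edge case when $B = \emptyset$, where $bl(A) = \epsilon_*$; this is consistent since $\epsilon_*$ is the bottom element of $(\Sigma^l_*, \leq)$ and is trivially below every schema, so the lemma still holds when $A$ is unblendable.
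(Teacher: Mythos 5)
Your proof is correct and follows essentially the same route as the paper's: both unfold $bl(A) = {\downarrow}\bigl(\bigcap_{s\in A}{\uparrow}s\bigr)$ and appeal to the compression/expansion adjunction. If anything, your write-up is more complete --- the paper explicitly verifies only that $bl(A)$ is a lower bound of $A$ and asserts maximality directly ``from the definition of compression,'' whereas you prove both halves (the lower-bound property via Proposition III.8 and the maximality via Propositions III.5 and III.7) and also note the unblendable edge case $B=\emptyset$.
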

\begin{proof}
Let $A \subseteq \Sigma^l_*$. Then we have:
$$ bl(A) = {\downarrow}(\bigcap\limits_{s \in A}{\uparrow}s)$$
Let:
$$bl(A) = t = {\downarrow}({\uparrow}s_1 \cap {\uparrow}s_2 \cap \dots \cap{\uparrow}s_n)$$
Thus from the definition of compression we have $t$ as the largest schema with:
$${\uparrow}t \subseteq {\uparrow}s_1 \mbox{ and } {\uparrow}t \subseteq {\uparrow}s_2 \mbox{ and } {\uparrow}t \subseteq {\uparrow}s_3 \mbox{ and } \dots  \mbox{ and } {\uparrow}t \subseteq {\uparrow}s_n$$
and from the definition of partial ordering for schemata we have:
$$t \leq s_1 \mbox{ and } t \leq s_2 \mbox{ and } t \leq s_3 \mbox{ and } \dots  \mbox{ and } t \leq s_n $$
\end{proof}
In simple terms this lemma says: blending a set of schemata $A$ returns the largest schema $s \in \Sigma^l_*$ which is smaller than all elements in $A$. In this way the schematic blending is similar to the infimum operator over the schematic lattice. However, instead of returning a schema $s\in A$, a schema $s \in \Sigma^l_*$ is required.  
\\
\\
So far we understand this much: given a generation at time $t$,  $G_t$ of a genetic algorithm, the schemata being tested by this generation is the schematic completion of $G_t$, $\mathcal{S}(G_t)$, this samples a complete sub-lattice of the lattice all the possible schemata. The set of schemata which can be reached by blending these schemata is then $\mathcal{B}(\mathcal{S}(G_t))$. However, the schematic blending lemma tells us that blending only searches the spaces inbetween the layers of the lattice, and not `sideways' or upwards. What is more, schematic blending is idempotent that is: $\mathcal{B}(\mathcal{B}(\mathcal{S}(G_t)))  = \mathcal{B}(\mathcal{S}(G_t))$. Thus, by combining schemata alone, only a search over the lower neighbours of pre-existing schemata can be performed, meaning only a subset of the space of all schemata can be reached. Figure 8 demonstrates how blending explores the space of all schemata and is a good visual summary of the results from this section. Thus we conclude, blending schemata alone is not a good tool for exploring the space of all schemata.    
\begin{figure*}[t]
    \centering
    \includegraphics[width=0.99\textwidth]{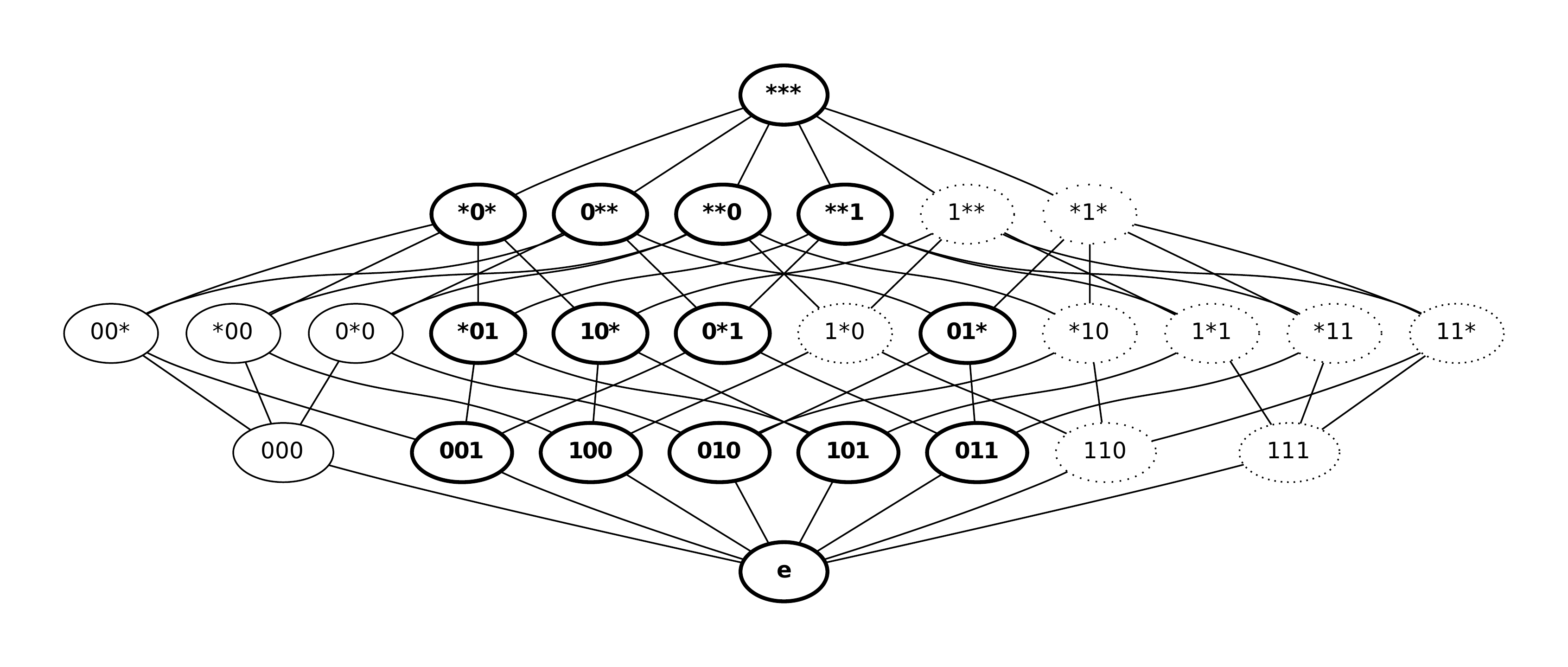}
    \caption{The search space for all schema of length three arranged as a complete lattice. The schemata surrounded by bold circles are those sampled by the generation $G_t = \{010, 011, 001, 101, 100\}$, that is the elements in the complete sub-lattice $\mathcal{S}(G_t)$. The schemata in dotted circles are those which cannot be reached through any combination of schemata in $\mathcal{S}(G_t)$, while the plain circles show the schemata which can be reached through the combination schema in $\mathcal{S}(G_t)$.  This figure was made using the schematax package.    }
    \label{fig:my_label}
\end{figure*}
\\
\\
It is proposed that the power of GAs comes through the combination of a particular type of schemata, namely building blocks. However, as combining schemata in general is not a good exploratory tool, it sheds a serious doubt on how useful combining building blocks is as a search tool. Specifically, if a building block is not `enclosed' within the schematic lattice defined by the initial generation of a GA, combining schemata alone will not discover it. This suggests the disruption and construction of schemata through the imperfect combination of schemata (via crossover) and mutation may play a vital role in allowing a greater exploration of the search space. This contrasts with the traditional view of GAs, where the disruption of schemata via crossover is traditionally seen as a nuisance as they hinder the combination of good schemata \cite{whitley1994genetic,voset1991punctuated}. There has been some work however which suggests that the construction of novel schemata through crossover play a useful exploratory role \cite{spears1991virtues}. 
\\
\\
In the next section, we demonstrate how the concepts of the schematic completion and the schematic lattice can be used experimentally to observe the schema processing performed by a GA. 
\section{Observing building blocks}
In this section, we use the schematic completion to observe the building blocks during the course of a GA. First, however, to study the building block hypothesis using the above methods, we must more precisely define building blocks. In Holland's framing (definition I.1), the phrases `low order' and `low defining length' could refer to an absolute value, such that `low order' schema are schema whose order is less than say $4$. However, for the purpose of this article we take `low' to be relative to the given generation, so that `low order' refers to schemata with below average order, similarly `low defining length' refers to schemata with below average defining length. Building blocks are then redefined as follows:
\begin{definition}
A building block is a schema with below average order, below average defining length and above average fitness \footnote{It should be noted that when computing the average order, fitness and defining length of schemata we do not include the empty schema nor schema with no wildcards (i.e the individuals of the population). }.  
\end{definition}
Using this definition one can find the building blocks present in a given generation by firstly using the schematic completion on the generation to find {\it all} schemata being tested, then by secondly filtering out the building blocks using the definition above. In our framing, the BBH is a statement about the map from a schematic lattice in generation $t$ to the schematic lattice in generation $t+1$. In particular, the BBH states the building blocks in generation $t+1$ should to some degree result from the combination of building blocks in generation $t$ 
\\
\\
However, before we examine how well building blocks are combined using the above updated version of the building block hypothesis and the notion of schematic blending, it is first interesting to examine how the average order and defining length of building blocks change over the course of a GA. If the order and defining length of building blocks increase, it suggests that building blocks are getting larger and more clumped together (as suggested by the BBH). To investigate, we consider the Canonical GA\footnote{The Canonical GA is a binary GA, with roulette wheel selection, single point crossover and mutation.} \cite{goldberg1988genetic} solving the all ones problem (where the fitness of an individual is the number of ones found in the string). We use binary strings of length $64$ and pick the mutation rate to be $0.005$ and the population size as $30$. We run the GA for $120$ generations, and for each generation, we calculate the schematic completion on each to yield all the schemata being tested by that generation, filter out the building blocks using our modified definition of the BBH above and then calculate the average order and defining length of the set of building blocks. We plot the results in Figure 9 averaged over $20$ simulations, where each simulation is started with a random initial population. In addition, to give the reader an indication of what the building blocks may look like for the all ones problem, we display the 3 building blocks from generations, $0$, $40$, $80$ and $120$ from one simulation in figure 9. As can be seen in Figure 9, the order and defining length of the building blocks is indeed increasing during the course of a GA. It seems, at least for the all ones problem, that the defining length and order of the building blocks quickly increase, and then level out around generation 60. 
\\
\\
The results in figure 9, hint at the BBH being implemented GA. However, it is still unclear if and how many building blocks are being explicitly combined by the genetic operators of a GA. Perhaps the results in figure 9 are not best explained by building blocks being combined but instead it is to be expected when a population becomes less random. In particular, when a set of words is more random the average order and defining length of the schemata will be lower because the words `agree' less. As such, the less random the population becomes (as is the case in a GA working on a reasonable fitness function), the more the words `agree' and thus the schemata as result will have higher order and defining length. It is interesting to note that the building blocks which are found are far less neat than those suggested by \cite{mitchell1992royal} in the royal road fitness functions.
\begin{figure*}[t]
\centering
\begin{subfigure}{0.45\textwidth}
\centering
\includegraphics[width=0.99\textwidth]{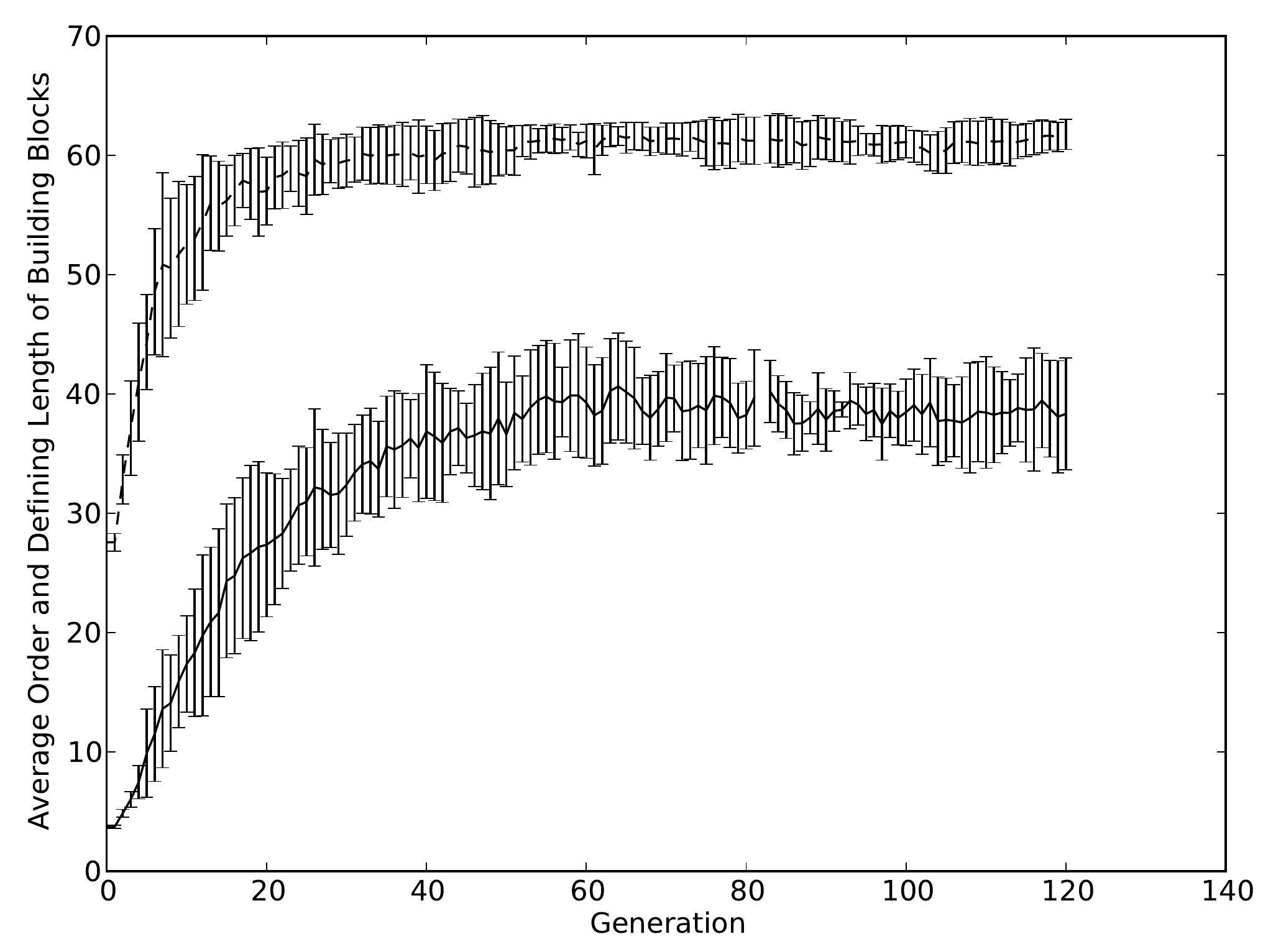}
\caption{The average order and defining length of building blocks during the course of a GA (averaged over $20$ simulations). In this plot, the dashed line represents the average defining length of the building blocks while the solid line represents the average order of the building blocks.}
\end{subfigure}
~
\begin{subfigure}{0.5\textwidth}
\centering

{\footnotesize
{\bf Generation 0:}
***1*****11*****************************************************
***1***************0*******************1************************
********************************0**1****************************
\\
{\bf Generation 40:}
1*101*01011*1***1*******111**1*1*01011***111*1**100*111111*0*1*1
1*101*01011*1***1********110*111*0*0111**111*1**100*111111*0*1*1
1*101*01011*1***1********110*111*0*0*11**111*1**100*111111*0*1*1
\\
{\bf Generation 80:}
\\
11*01*0101**110***0*111*01***1*110**11*10*11*11**00**11****0*1**
11*01*0101**110***0*111**1***1*110**11*10*11*11**00**11****0*1**
11*01**10*101100****111**1**11*1*****1110*11*11***0**11****0*1**
\\
{\bf Generation 120:}
*1*****1*1***1**110011110*11*01*1**1****0******0***1*1111*01*111
*1*****1*11**1**110011110*11*01*1**11***0******00**1*1111*01*111
*1*****1*11**1**110011110*11*01*1**1*1**0******00**1*1111*01*111
}
\caption{A selection of three building blocks from generations: 0, 40, 80 and 120.}

\end{subfigure}
\caption{}
\end{figure*}
\\
\\
For our second experiment, we use the concept of schematic blending to examine exactly how many building blocks from a generation $t+1$ result from the combination of building blocks in generation $t$. We test how well building blocks are combined using different crossover methods. The GA is setup the same as above, however to keep the calculation of the schematic completion and blends tractable, we use a population of size $12$ working over individuals of length $16$. We calculate the building blocks of generation $t$ as before, we then find the set of all schematic blends on the building blocks, let's call this set $B$. It is then checked what percentage of the building blocks in generation $t+1$ are members of $B$. Figure 10 displays the results. The results, in this case, are averaged over 100 simulations. On average only approximately $25\%-35\%$ of building blocks from a generation are created by the combination of building blocks from the previous generation in the case of 1 to 9 point crossover as well as uniform crossover (UX). Interestingly, it is proposed that UX disturbs the combination of building blocks compared to traditional crossover methods \cite{Syswerda:1989,Spears97recombinationparameters}, however we find it combines building blocks equally well as other crossover methods.  In general the reason why building blocks are not combined optimally are several and mostly well known: firstly individuals which are instances of building blocks are not guaranteed to be selected by roulette wheel selection for crossover, thus those building blocks cannot be blended, secondly mutation can disrupt the blending of building blocks if a fixed symbol is mutated after crossover, thirdly crossover is not guaranteed to blend schemata if a suboptimal crossover point is chosen. Thus, we can conclude combining building blocks from generation $t$ only accounts for $25-35\%$ of the building blocks in generation $t+1$, the remaining building blocks from generation $t+1$ are created by other means. 
\\
\\
Probabilistic crossover (PX) which is similar to UX but chooses bits using a weighted probability proportional to the fitness of the parents, combines building blocks the most effectively (due to it picking fitter bits with a higher probability), yet finds the optimal solution in a later generation. It is proposed that ``competent genetic algorithms combine building blocks" \cite{john1992holland}. PX offers a counter-example to this statement as it combines building blocks well, but is not `competent' in that it takes a greater number of generations to find the optimal solution compared to other crossover methods which combine building blocks poorly. It is possible that greater combination of building blocks limits the exploration of the GA. To explain further: the schematic blending lemma tells that combining building blocks will only explore the lower neighbors of preexisting building blocks, thus the GA (with PX) explores this subset well, but does not search other areas of the schematic lattice effectively. Much like Mitchell et al. \cite{mitchell1992royal}  we believe that the over emphasis of building blocks forces the GA into a locally optimal sublattice of the search space. We conclude that the ability of a genetic algorithm to combine building blocks does correspond to how quickly it will find the optimal solution and that the novel creation of schemata  (through methods other than combining building blocks) is vital for a competent GA.  

\begin{figure}[t]
\centering
  \footnotesize{
  \begin{tabular}{l|l|l|}
 
         {\bf Crossover method}& {\bf Solution found} & {\bf \% Building Blocks combined}  \\
         \hline
         1-point & $ 107.3 \pm 25.0 $ & $0.25 \pm 0.16$ \\
         \hline
         2-point & $104.77\pm 28.27$ & $0.37\pm 0.16$ \\
         \hline
         3-point & $101.16\pm 29.13$ & $0.37\pm 0.17 $\\
         \hline
         4-point & $105.06 \pm 29.55$ & $0.37 \pm 0.16$ \\ 
         \hline
         5-point & $94.19\pm 38.36 $& $0.37 \pm 0.16$ \\
         \hline
         6-point &  $104.41 \pm 30.23$& $0.36\pm 0.17$ \\
         \hline 
         7-point &  $103.11 \pm 32.28$ & $0.36 \pm 0.16$ \\
         \hline
         8-point  &$94.37 \pm 36.06$ & $0.36 \pm 0.16 $ \\
         \hline
         9-point & $99.51 \pm 33.19 $ &$0.36 \pm 0.17$ \\  
         \hline
         UX & $ 99.95 \pm 31.20$ & $0.36 \pm 0.16$\\
         \hline
         PX & $119.88 \pm 1.19$ & $0.53 \pm 0.18$ \\
         \hline
         
  \end{tabular}
  }
  \label{tab:1}
   \caption{Time taken for various crossover methods to find the optimal solution and the average percentage of building blocks combined by the respective crossover method. UX here stand for uniform crossover, while PX stands for probabilistic crossover. Each GA is solving the all ones problem on strings of size 16, with a population of size 12 and uses roulette wheel selection. The results are averaged over 100 simulations, each starting with a random initial population. }
\end{figure}

\section{Discussion and Conclusion}
The schematic completion and the schematic lattice, are fruitful in the field of GAs both theoretically, as demonstrated in the mathematics presented in sections 4, and experimentally as demonstrated in section 5. It seems using both methods however, inconsistencies are found the original schema processing theory for GAs. Specifically, section 4 shows that combining schemata (and thus building blocks) explicitly is not a good method to explore the search space of schemata as only the lower neighbors of pre-existing schemata can be reached. While section 5 shows that `competent' GAs do not seem to be very concerned with combining building blocks to begin with, as only approximately 25-35\% of building blocks seem to come from the combination of the previous generations building blocks for most crossover methods. In addition, an increase in the combination of building blocks (as seen in PX) does not correspond with an increase in efficiency of a GA, rather, it hinders the GA. The reason for this follows from the schematic blending lemma, specifically combining building blocks only explores a subset of the search space, thus the more a GA combines building blocks the more it gets stuck in this subset. Finally, the increase of order and defining length of building blocks over time (as is seen in figure 9), is better explained by a decrease in randomness in the population rather than by building blocks being combined. However, we believe the most significant contribution of this article are the methods introduced to explicitly calculate the schemata present in a population and the identification underlying lattice structures involved in schema processing. We hope that these methods will deepen the understanding of GAs.  If the reader is interested in these methods, we encourage them to exploit the schematax software which is introduced in the following appendix section.

\section{Schematax - A Python Software package for schemata}
To complement this article, we introduce an open source python package which implements schemata and all of their properties defined above. Importantly the package allows one to compute the schematic completion and to draw the schematic lattice, it can downloaded from \url{https://github.com/iSTB/python-schemata}.
\\
\\
Naively calculating the schematic completion using the definitions given above requires iterations over the powerset and thus is very computationally expensive ($\mathcal{O}(2^n)$). Thus we introduce an algorithm, algorithm 1 (below), to compute the schematic completion. This algorithm is based on the algorithm presented in \cite{nourine1999fast}. In this algorithm, the join operation is used on each pair of schemata. This algorithm exploits the commutativity ($x \vee y = y \vee x$) and idempotency ($x \vee x = x$) of the join operation. Meaning we do not have to compute $x \vee x$, and if $x \vee y$ is computed, we do not need to compute $y \vee x$. This allows the inner loop only to loop over a subset of schemata. Additionally, we exploit the the atomistic nature of the schematic lattice to build the lattice from the bottom up. 

\begin{proposition}
Algorithm 1's worst case time complexity is $\mathcal{O}(nkN^2)$ where $N$ is the total number of schemata found, $k$ is the length of the strings and $n$ the initial number of strings, that is the size of $P$. 
\end{proposition}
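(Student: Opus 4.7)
The plan is to analyze Algorithm 1 by decomposing its work into three elementary costs: the cost of a single join operation, the cost of inserting (or rejecting as duplicate) a newly produced schema into the accumulated set, and the number of times the outer iteration is executed. I would start by making explicit what the algorithm actually does: it maintains a growing collection $L$ of schemata, initialized with the $n$ atoms from $P$, and in its main loop it pairs each newly-added schema with every earlier schema (or, in the atomistic variant exploited here, with each of the $n$ atoms), computes the join $x \vee y$, and inserts the result into $L$ if it is not already present. Commutativity and idempotency allow the inner loop to skip symmetric pairs and self-joins, but do not change the asymptotics.

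Next I would bound each of the three costs separately. A single join $x \vee y$ on schemata of length $k$ is computed position-by-position by comparing symbols (if equal keep the symbol, else output $*$), so it costs $\mathcal{O}(k)$. A membership test ``is this new schema already in $L$?'' in the worst case compares the candidate against every element currently in $L$, and since $|L|$ can grow up to $N$, each such test costs $\mathcal{O}(kN)$ using a direct string comparison (assuming no hashing). The outer driver of the algorithm processes each of the $N$ schemata once, and for each schema the inner loop performs at most $n$ pairings (one against each atom), giving $\mathcal{O}(nN)$ join-and-insert events overall.

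Combining these bounds gives $\mathcal{O}(nN) \cdot \bigl(\mathcal{O}(k) + \mathcal{O}(kN)\bigr) = \mathcal{O}(nkN^2)$, which is the claimed bound. I would close by remarking that Theorem III.10 guarantees that the iteration terminates with the complete schematic lattice, so $N$ is well-defined, and that the $N^2$ factor comes entirely from the naive duplicate-detection step rather than from the joins themselves.

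The main obstacle I anticipate is making the proof self-contained without a formal listing of Algorithm 1 in the excerpt: I have to commit to a specific bookkeeping model (in particular, how duplicates are detected and how the inner loop is parameterised by the atoms rather than by all schemata) in order to get exactly $nkN^2$ rather than, say, $kN^3$ or $nkN$. A secondary subtlety is justifying that the bottom-up atomistic construction, which exploits the fact proved after Theorem III.10 that the atoms of the schematic lattice are precisely the elements of $P$, suffices to generate every element of $\mathcal{S}(P)$ by repeated joins against atoms; this is what lets the inner loop be of length $n$ rather than $N$.
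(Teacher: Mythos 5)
Your proposal is correct and matches the paper's own accounting almost exactly: both arguments bound the number of join-and-test events by $\mathcal{O}(nN)$ (you iterate schemata against atoms, the paper iterates atoms against the growing schema list, but the pair count is identical), charge $\mathcal{O}(k)$ per join and up to $\mathcal{O}(N)$ comparisons per linear-scan membership test, and multiply to get $\mathcal{O}(nkN^2)$. The only cosmetic difference is that the paper writes the sum over the outer loop explicitly with a proportionality factor $p_c \leq 1$ before collapsing it to the same bound.
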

\begin{proof}
The outer loop (line 4) loops over $P$, thus clearly looping $n$ times. For each $n$, the inner loop (line 5) loops $p_c \cdot N-c$ times, where $p_c \leq 1$ is the proportion of N currently found. $join(x,y)$ takes $k$ steps (line 6), where $k$ is the length of the strings. To check if $s$ is not in the current set of schemata found takes $p_c \cdot N$ steps (line 7). So we have: 
$$ \mathcal{O}\left(\sum_{c =1}^{n} (p_c\cdot N-c)\cdot k \cdot (p_c\cdot N) \right)$$
$$ =\mathcal{O}\left(\sum_{c =1}^{n} N \cdot k \cdot  N \right)$$
$$ =\mathcal{O}\left(N^2 \cdot k \sum_{c =1}^{n}1 \right)$$
$$ = \mathcal{O}(nkN^2) $$
\end{proof}
To draw the schematic lattice we exploit the Graphviz software \cite{Gansner00anopen} which allows one to to draw aesthetically pleasing lattices efficiently. In addition, we use the package cython to transfer the Python code into C \cite{behnel2010cython}. This dramatically increases the efficiency of the schematax package. For more information the reader is referred to the documentation of the software package at: \url{https://github.com/iSTB/python-schemata}. 

\begin{algorithm}
\caption{Schematic completion}
\label{Schematic completion}
\begin{algorithmic}[1]
\Procedure{Complete(P)}{}
\State $\textit{schemata} \gets  P$
\State $c \gets  1$
\For {$x \textit{ in P}$}
\State $check \gets \textit{ in schemata[c:]} $
\For {$y \textit{ in check}$}

\State $s \gets join(x,y)$ 

\If {$s \notin \textit{schemata}$}
\State $\textit{schemata} = \textit{schemata} \cup \{s\}$
\EndIf
\EndFor
\State $c \gets  c+1$
\EndFor

\State \Return  $\textit{schemata} \cup \{\epsilon_*\}$

\EndProcedure
\end{algorithmic}
\end{algorithm}

\begin{algorithm}
\caption{Join}
\label{Join}
\begin{algorithmic}[1]
\Procedure{$join(x,y)$}{}
\State $k \gets  length(x)$
\State $s \gets  \textit{empty string} $
\For {$\textit{for i in [0, \dots ,k]}$}

\If {$x[i] = y [i]$}
    \State {$s[i] \gets x[i]$}
\Else 
    \State {$s[i] \gets *$}

\EndIf

\EndFor

\State \Return  $s$

\EndProcedure
\end{algorithmic}
\end{algorithm}

\section*{Acknowledgment}
This work was supported by the Marie Curie Initial Training Network FP7-PEOPLE-2013-ITN (CogNovo, grant number 604764), the Engineering and Physical Sciences Research Council (BABEL, grant number EP/J004561/1), and the University of Plymouth
(through a PhD studentship to Jack McKay Fletcher). We also wish to thank Diego Maranan, Sue Denham and John Matthias for their valuable comments.

\ifCLASSOPTIONcaptionsoff
  \newpage
\fi



%


\FloatBarrier
\bibliographystyle{plain} 
\bibliography{ref}{}

\end{document}